\documentclass{article}

\usepackage[main, final]{neurips_2026}

\usepackage[utf8]{inputenc} 
\usepackage[T1]{fontenc}    
\usepackage{hyperref}       
\usepackage{url}            
\usepackage{booktabs}       
\usepackage{amsfonts}       
\usepackage{nicefrac}       
\usepackage{microtype}      
\usepackage{xcolor}         
\usepackage{hyperref}
\usepackage{url}
\usepackage{amsmath}
\usepackage{tikz}
\usetikzlibrary{arrows.meta, positioning, shapes.geometric}
\usepackage{graphicx}
\usepackage{subcaption}
\usepackage{caption}
\usepackage{tikz}
\usepackage{subcaption}
\usepackage{xcolor}
\usepackage{amsthm}

\newtheorem{lemma}{Lemma}
\newtheorem{Corollary}{Corollary}

\definecolor{KEcolor}{HTML}{76B7B2}
\definecolor{OKEcolor}{HTML}{F28E2B}
\definecolor{OKFcolor}{HTML}{59A14F }
\definecolor{KFcolor}{HTML}{E15759}
\definecolor{KNcolor}{HTML}{4E79A7}
\definecolor{LSTMcolor}{HTML}{EDC948}
\definecolor{OBScolor}{HTML}{B07AA1}
\usepackage{listings}
\usepackage{tcolorbox}
\tcbuselibrary{listings, skins}
\definecolor{pybggray}{RGB}{245,245,245}     
\definecolor{pykeyword}{RGB}{0,128,0}        
\definecolor{pystring}{RGB}{34,139,34}       
\definecolor{pycomment}{RGB}{41,171,135}     
\definecolor{pynumber}{RGB}{0,128,0}      
\definecolor{pyoperator}{RGB}{138,43,226}    
\definecolor{pyfunction}{RGB}{0,0,200}       
\definecolor{MidBlue}{RGB}{70,130,200}  
\usepackage{subcaption}
\usepackage{float}
\usepackage{amsthm}
\newtheorem{proposition}{Proposition}

\lstdefinestyle{pythonstyle}{
  language=Python,
  backgroundcolor=\color{pybggray},
  basicstyle=\ttfamily\small,
  keywordstyle=\color{pykeyword}\bfseries,
  stringstyle=\color{pystring},
  commentstyle=\color{pycomment}\itshape,
  identifierstyle=\color{black},
  emphstyle=\color{pyfunction}\bfseries,               
  emph={function_approximate, graph_approximate, predict, update}, 
  morekeywords={as, with, lambda, yield, from, nonlocal, global, async, await},
  showstringspaces=false,
  breaklines=true,
  tabsize=2,
  frame=none,
  numbers=none,
  xleftmargin=0pt,
  columns=fullflexible,
  aboveskip=0pt,
  belowskip=0pt,
  literate=
    *{0}{{{\color{pynumber}0}}}{1}
     {1}{{{\color{pynumber}1}}}{1}
     {2}{{{\color{pynumber}2}}}{1}
     {3}{{{\color{pynumber}3}}}{1}
     {4}{{{\color{pynumber}4}}}{1}
     {5}{{{\color{pynumber}5}}}{1}
     {6}{{{\color{pynumber}6}}}{1}
     {7}{{{\color{pynumber}7}}}{1}
     {8}{{{\color{pynumber}8}}}{1}
     {9}{{{\color{pynumber}9}}}{1}
     {+}{{{\color{pyoperator}+}}}{1}
     {*}{{{\color{pyoperator}*}}}{1}
     {@}{{{\color{pyoperator}@}}}{1}
     {-}{{{\color{pyoperator}-}}}{1}
     {/}{{{\color{pyoperator}/}}}{1}
     {=}{{{\color{pyoperator}=}}}{1}
}

\newtcolorbox{pythonbox}[1][]{%
  listing only,
  listing engine=listings,
  colback=pybggray,     
  colframe=pybggray,    
  boxrule=0pt,          
  left=0pt,
  right=0pt,
  top=0pt,
  bottom=0pt,
  boxsep=0pt,           
  enhanced,
  sharp corners,
  frame hidden,
  overlay={},           
  valign=center,
  halign=left,
  listing style=pythonstyle,
  #1
}

\title{The Kalman Evolve: Closing the Gap in Kalman Filtering via Interpretable Algorithm Discovery
}

\author{
  Vasileios Saketos \\
  KTH Royal Institute of Technology \\
  Stockholm Sweden\\
  \texttt{saketos@kth.se}
  \And
  Ming Xiao \\
  KTH Royal Institute of Technology \\
  Stockholm Sweden\\
  \texttt{mingx@kth.se}
}

\begin{document}

\maketitle

\begin{abstract}

State estimation is a fundamental problem in control and signal processing, for which the Kalman Filter provides an optimal solution under linear dynamics, Gaussian noise, and known noise covariances. However, these assumptions often fail in realistic sensing settings such as Doppler radar and LiDAR. In these cases, the optimal estimator is inherently nonlinear, which leads to systematic performance degradation. This creates a performance gap that cannot be eliminated by tuning the noise covariance parameters (i.e., the process and measurement noise in the Kalman Filter) alone. To address this limitation, we propose Kalman Evolve, a framework for discovering improved filtering algorithms by jointly optimizing both noise parameters and the update structure. Our approach leverages large language models (LLMs) as a structured prior over program space, enabling the generation of interpretable, non-affine modifications to the classical Kalman filter while preserving its recursive form. We provide analytical results establishing the suboptimality of affine estimators under common nonlinear sensing models, motivating the need for structure-aware updates. Across a range of synthetic and real-world tracking benchmarks, including Doppler radar, LiDAR-based localization, and pedestrian tracking, the discovered algorithms consistently improve over strong baselines such as the Optimized Kalman Filter, achieving up to 12\% reduction in RMSE. These results suggest that optimizing the structure of the Kalman filter, rather than only its parameters, provides a practical and interpretable way to improve state estimation.

\end{abstract}


\section{Introduction}

The Kalman filter ~\citep{10.1115/1.3662552} is a cornerstone of state estimation, widely used due to its optimality and efficiency under linear dynamics and Gaussian noise assumptions. However, modern sensing systems—such as Doppler radar and LiDAR—produce observations that are inherently nonlinear functions of the underlying state. In these settings, the standard Kalman update, which is restricted to affine transformations of the measurement residual, cannot fully capture the structure of the optimal estimator.
Various extensions, such as the Extended Kalman Filter (EKF)~\citep{gelb1974applied} and the Unscented Kalman Filter (UKF)~\citep{Julier1997NewEO}, handle nonlinear dynamics via local linearization and sigma-point approximations, respectively. While preserving the recursive structure of the original algorithm, they assume known system models (with EKF requiring explicit derivatives) and lack optimality guarantees and data-driven adaptability, often degrading under model mismatch. Most data-driven approaches focus on parameter estimation. In particular,~\citep{1624478} estimate the noise covariances Q and R when they are unknown, while more recent work such as the Optimized Kalman Filter (OKF)~\citep{greenberg2023optimization} learns these quantities directly, achieving strong empirical performance. However, these methods remain limited to the affine estimator class and therefore cannot overcome its fundamental limitations.

In parallel, deep learning approaches such as Recurrent Neural Networks (RNNs), Long Short-Term Memory (LSTM) networks, and Gated Recurrent Units (GRUs)~\citep{hochreiter1997long,rumelhart1986learning,cho2014learning} have become standard tools for time-series prediction. Neural Kalman Filters~\citep{revach2022kalmannet,10485649} aim to combine model-based and data-driven approaches to handle partially known dynamics. However, these methods typically rely on black-box function approximation, require substantial training data, lack interpretability, and in some cases may underperform classical Kalman Filters. Notably, prior work (e.g., OKF) demonstrates that optimized Kalman filtering methods consistently outperform neural Kalman Filter approaches across a range of settings, a trend we also observe in our experiments.

Very recently, \cite{saketos2025data} introduced an algorithmic discovery framework based on FunSearch~\citep{RomeraParedes2024} for evolving the Kalman Filter in settings where its assumptions are violated. Their approach considers controlled scenarios in which the Kalman Filter is initially optimal with known parameters, and then systematically relaxes these assumptions, leading to performance degradation. The discovered algorithms demonstrate substantial improvements over the resulting mismatched Kalman Filter in these settings. While this highlights the potential of algorithmic discovery for state estimation, the evaluation is restricted to synthetic scenarios with known system parameters and does not include strong baselines such as adaptive Kalman filters, OKF, or KalmanNet. As a result, it remains unclear whether such approaches yield improvements in realistic settings with unknown system dynamics and parameter uncertainty.

More importantly, what remains missing is a unified approach that jointly addresses both parameter uncertainty and the structural limitations of the Kalman update under model mismatch.
To address this, we move beyond methods that focus solely on either parameter tuning or structure optimization, and instead optimize both the noise parameters and the structure of the Kalman Filter itself, enabling improved performance while preserving the recursive and interpretable form of the Kalman Filter. Unlike prior work on algorithmic discovery, which focuses on controlled settings with known system parameters, we address realistic scenarios in which both the system parameters and the inference structure must be learned from data. We first estimate the process and measurement noise covariances, Q and R, to obtain a well-calibrated and competitive baseline. Building on this, we employ an LLM-assisted evolutionary search to discover improved update structures, enabling improved performance while preserving the recursive and interpretable form of the Kalman Filter. This enables algorithm discovery in realistic settings with unknown dynamics, rather than only in controlled synthetic scenarios.
\begin{itemize}
    \item We show that affine Kalman-style measurement updates are structurally suboptimal under common nonlinear sensing models, including Doppler radar and LiDAR.

    \item We introduce a framework that jointly optimizes noise parameters and update structure, enabling the discovery of interpretable, non-affine extensions of the Kalman Filter while preserving its recursive form. 
    \item We demonstrate that the discovered algorithms consistently outperform strong baselines, including the Optimized Kalman Filter (OKF), across synthetic and real-world benchmarks, achieving up to 12\% reduction in RMSE with comparable computational cost.

    \item We release code and discovered algorithms to support reproducibility and further research in interpretable algorithmic discovery.
\end{itemize}

\section{Preliminaries}

\vspace{-.5em}
\begin{figure}[h]
\begin{minipage}{0.38\textwidth}
\small

In this work, we address two closely related yet fundamentally distinct tasks, as illustrated in Figure~\ref{fig:tasks}. State estimation (SE) refers to inferring the posterior state given a prior belief and a new observation, whereas next state prediction (NSP) concerns estimating the state from the predictive distribution before the  observation is received.

\end{minipage}
\hfill
\begin{minipage}{0.60\textwidth}
\begin{center}
\footnotesize
\begin{tikzpicture}[baseline=-0.6ex, x=1cm, y=1cm]

\filldraw[fill=blue, draw=black, line width=0.8pt] (0.5,0) circle (2.2pt);
\node[anchor=west] at (0.6,0) {Last observation};

\draw[fill=blue, line width=1.pt] (2.8,0) -- (3.1,0);
\node[anchor=west] at (3.2,0) {Ground truth};
\draw[green!60!black, dashed, line width=1pt] (5.,0) -- (5.3,0);
\node[anchor=west] at (5.4,0) {Estimate};

\node[star, star points=5, star point ratio=2.25,
      fill=green!60!black, draw=green!60!black, inner sep=1.2pt] at (6.9,0) {};
\node[anchor=west] at (7.1,0) {target state};

\end{tikzpicture}%

\end{center}

\vspace{-.5em}

\includegraphics[width=0.45\linewidth]{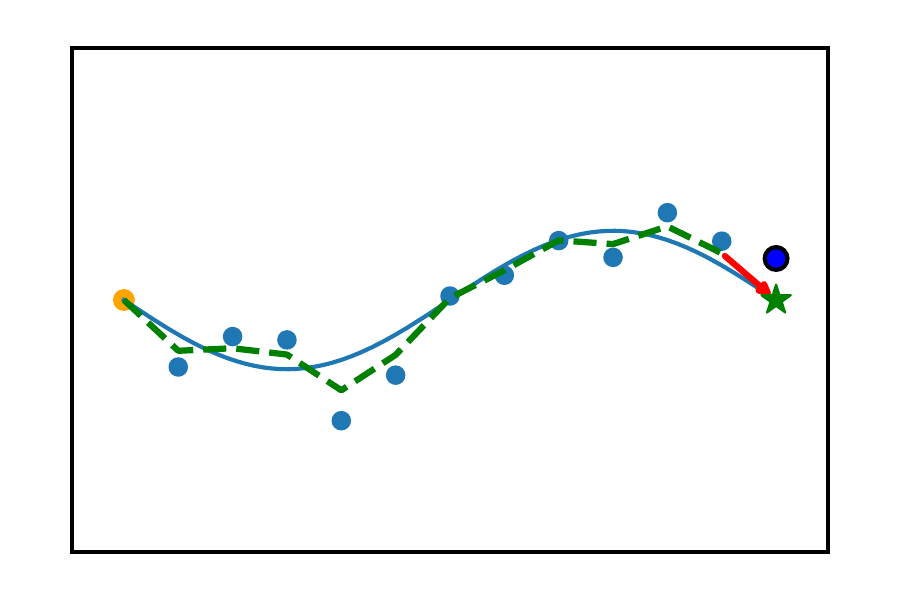}
\hfill
\includegraphics[width=0.45\linewidth]{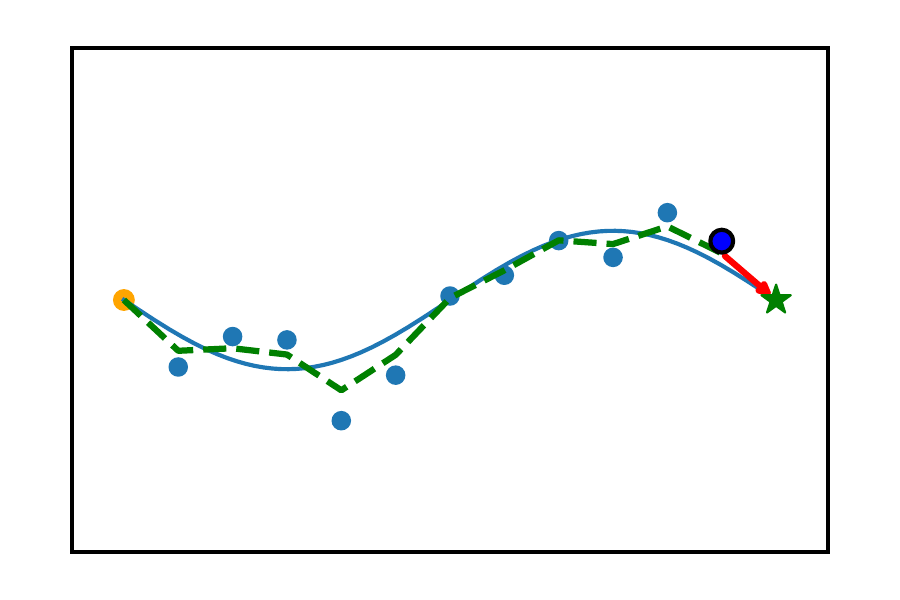}
\vspace{-.5em}

\caption{State estimation (left), Next-state prediction (right)}

\label{fig:tasks}
\end{minipage}
\vspace{-.8em}
\end{figure}

The Kalman Filter, as illustrated in Figure~\ref{fig:Kalman_filter}, is designed for recursive Bayesian state estimation, i.e., computing the posterior distribution $p(x_t \mid z_{1:t})$. This involves first performing the prediction step to obtain the prior distribution $p(x_t \mid z_{1:t-1})$, followed by the update step that incorporates the observation $z_t$. In contrast, next state prediction is obtained by applying the prediction step to the updated state estimate, yielding $p(x_{t} \mid z_{1:t})$. Operationally, this corresponds to first updating the state at time $t$ and then propagating it forward. Thus, while state estimation follows a predict--update sequence, next state prediction can be viewed as an update--predict sequence.

We consider a classical system  evolving according to the following discrete-time linear state equations:
\[ \small
x_t = F x_{t-1} + w_{t-1}, \quad w_{t-1} \sim \mathcal{N}(0, Q) \,\,\,\,\,\,\,\,\,\,\,\,\,\,\,\,\,\,\,\,
z_t = H x_t + v_t, \quad v_t \sim \mathcal{N}(0, R),
\]
where \( F \in \mathbb{R}^{n \times n} \) is the state transition matrix, 
\( H \in \mathbb{R}^{m \times n} \) is the observation matrix, and 
\( Q \) and \( R \) are the covariance matrices of the process and observation noise, respectively. 
The Kalman Filter operates recursively in two main steps: predict and update. 
The predict step propagates the current state and its uncertainty forward in time, 
estimating the system's next state without incorporating new observations. 
Upon receiving the observation \( z_t \), the update step adjusts the prediction using the measurement.

\begin{figure}[h]

\begin{center}
\begin{tikzpicture}[
    scale=0.8,
    >=Latex,
    font=\small,   
    state/.style={
        ellipse, draw, fill=black,
        text=white, align=center,
        minimum width=3.cm, minimum height=1.cm
    },
    block/.style={
        rounded corners=8pt,
        draw,
        text width=5cm,
        align=center,
        inner sep=1pt
    }
]


\node[block] (predict) at (8,2.5) {
\textbf{\underline{Predict}}\\[5pt]
$\hat{x}_{t+1}^P := \tilde{F}_t \hat{x}_t$\\[4pt]
$\hat{P}_{t+1}^P := \tilde{F}_t \hat{P}_t \tilde{F}_t^{T} + \hat{Q}$
};

\node[block] (update) at (-0,0.) {
\textbf{\underline{Update}}\\[5pt]
$K := \hat{P}_t^P \tilde{H}_t^{T}
(\tilde{H}_t \hat{P}_t^P \tilde{H}_t^{T} + \hat{R})^{-1}$\\[4pt]
$\hat{x}_t := \hat{x}_t^P + K(z_t - \tilde{H}_t \hat{x}_t^P)$\\[4pt]
$\hat{P}_t := (I - K\tilde{H}_t)\hat{P}_t^P$
};


\node[state] (xt) at (-0,2.5) {$\hat{x}_t,\ \hat{P}_t$};

\node[state] (xtp) at (8,0) 
{$\hat{x}_{t+1}^P,\ \hat{P}_{t+1}^P$};

\node[draw, rounded corners=3pt,
      fill=gray!15] (time) at (4.5,0) {$t := t+1$};

\node[diamond, draw, fill=black, text=white,
      aspect=2, minimum width=1.6cm] 
      (zt) at (-3,1.9) {$z_t$};


\draw[->] (xt.east) --  (predict.west);

\draw[->] (predict.south) -- (xtp.north);

\draw[->] (xtp.west) -- (time.east);

\draw[->] (time.west) -- (update);

\draw[->] (zt) -- (update);

\draw[->] (update.north) -- ++(0,1) -| (xt.south);
\end{tikzpicture}
\end{center}

\caption{The Kalman Filter.}
\label{fig:Kalman_filter}

\end{figure}

As shown in Figure \ref{fig:Kalman_filter}, the matrix \( K \) is the Kalman gain, determining the relative weighting between the predicted state and the new observation. The posterior estimate \( \hat{x}_{t} \) combines the prior with the measurement residual, while \( P_{t} \) reflects the reduced uncertainty after the update. The Kalman Filter is optimal under the assumptions of linearity and Gaussian noise, providing the minimum mean squared error (MMSE) estimate of the state. However, in real-world settings, the process and observation noise are often unknown and may deviate from Gaussianity. For the case of unknown Gaussian noise covariances \( Q \) and \( R \), \cite{1624478} proposed a least-squares-based algorithm for their estimation. For linear systems with non-Gaussian noise, the OKF framework parameterizes \( Q \) and \( R \) via a Cholesky decomposition~\citep{Pinheiro1996}, enabling their estimation through backpropagation. Although these approaches improve empirical performance, the Kalman Filter remains fundamentally tied to Gaussian assumptions on the prior and observation model. As a result, even when the underlying dynamics are linear, non-Gaussian observation noise generally induces a non-Gaussian posterior distribution, rendering affine estimators such as the Kalman Filter suboptimal. This limitation becomes evident in common sensing models such as Doppler radar and LiDAR.

\begin{Corollary}[Doppler model mismatch]
Consider an idealized Doppler tracking setting. Let \(P\in\mathbb R^3\) denote target position and
\(V\in\mathbb R^3\) denote target velocity. Assume \(P,V\) are jointly Gaussian, isotropic, and
position--velocity correlated:
\(
\mathbb E[P]=\mathbb E[V]=0,\qquad
\operatorname{Cov}(P)=\sigma_P^2I_3,\qquad
\operatorname{Cov}(V,P)=\rho I_3,\qquad \rho\neq0 .
\)
The observation is \(Z=(Y,S)\in\mathbb R^4\), where
\(
Y=P+N\in\mathbb R^3,\qquad
S=u(P)^\top V+\varepsilon\in\mathbb R,\qquad
u(p)=p/\|p\|,
\)
with \(N\sim\mathcal N(0,\sigma_N^2I_3)\), \(\varepsilon\sim\mathcal N(0,\sigma_\varepsilon^2)\), and all
noises independent of each other and of \((P,V)\). Here \(Y\) is a noisy Cartesian position measurement and
\(S\) is a noisy radial-velocity Doppler measurement. Let
\(
\mathcal F_{\rm aff}:=\{f:\mathbb R^4\to\mathbb R^3\mid f(y,s)=a+Ay+bs\}
\)
be the class of affine position estimators from \(Z=(Y,S)\). Then the Bayes estimator
\(g^*(Z):=\mathbb E[P\mid Z]\) is not affine. Consequently,
\(
\inf_{f\in\mathcal F_{\rm aff}}\mathbb E\|P-f(Z)\|^2
>
\mathbb E\|P-g^*(Z)\|^2 .
\label{eq:strict_mse_gap_doppler}
\)
\end{Corollary}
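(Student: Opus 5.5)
The plan is to argue by contradiction: assume $g^*$ is affine, and use conditioning on $Y$ alone, where everything is Gaussian and explicit, to force two incompatible identities. Throughout I use two facts. First, $V\mid P\sim\mathcal N(\kappa P,\cdot)$ with $\kappa:=\rho/\sigma_P^2\neq0$. Second, $P\mid Y=y\sim\mathcal N(cy,\tau^2 I_3)$ with $c=\sigma_P^2/(\sigma_P^2+\sigma_N^2)>0$ and $\tau^2=\sigma_P^2\sigma_N^2/(\sigma_P^2+\sigma_N^2)>0$. All moments are finite, so the MSE statements make sense.

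\textbf{Step 1 (a Doppler-independent identity).} Since $\varepsilon$ is independent and centred, and $V\mid P,Y$ has mean $\kappa P$ (because $N$ is independent of $(P,V)$), we have
\[
\mathbb E[S\mid P,Y]=u(P)^\top\kappa P=\kappa\|P\|.
\]
Hence $\mathbb E[S\mid Y]=\kappa\,\mathbb E[\|P\|\mid Y]$. The same argument gives $\mathbb E[PS\mid Y]=\kappa\,\mathbb E[P\|P\|\mid Y]$.

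\textbf{Step 2 (the Doppler coefficient must vanish).} Suppose $g^*(Y,S)=a+AY+bS$ almost surely. Taking $\mathbb E[\cdot\mid Y]$ and using the tower property gives
\[
\mathbb E[P\mid Y]=a+AY+b\,\kappa\,\mathbb E[\|P\|\mid Y].
\]
The left side is linear in $Y$. Now $m(y):=\mathbb E[\|P\|\mid Y=y]$ is the mean norm of $\mathcal N(cy,\tau^2I)$. It is even in $y$, not constant, and grows like $c\|y\|$ as $\|y\|\to\infty$, so it is not affine. Therefore $b\kappa=0$, and since $\kappa\neq0$ we get $b=0$. Consequently $\mathbb E[P\mid Y,S]=a+AY$ is $\sigma(Y)$-measurable, and so equals $\mathbb E[P\mid Y]$.

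\textbf{Step 3 (contradiction via a conditional covariance).} Because $\mathbb E[P\mid Y,S]=\mathbb E[P\mid Y]$, we have $\mathbb E[PS\mid Y]=\mathbb E[\,\mathbb E[P\mid Y,S]\,S\mid Y]=\mathbb E[P\mid Y]\,\mathbb E[S\mid Y]$. By Step 1 this reads
\[
\kappa\,\operatorname{Cov}\!\big(P,\|P\|\,\big|\,Y\big)=0\quad\text{a.s.}
\]
I expect the main obstacle to be showing this conditional covariance is nonzero on a set of positive probability. Fix $y\neq0$ and write $P=cy+\tau W$ with $W\sim\mathcal N(0,I_3)$. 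Project onto $e=y/\|y\|$. Rotational symmetry about the $e$-axis kills the orthogonal components, so it suffices to show $\operatorname{Cov}(e^\top W,\|cy+\tau W\|)>0$.

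Write $W=(W_1,W_\perp)$ with $W_1=e^\top W$. Then $\|cy+\tau W\|=\sqrt{(c\|y\|+\tau W_1)^2+\tau^2\|W_\perp\|^2}$. Conditional on $W_\perp$, this is a function of $W_1$ alone. I would then use Stein's identity, $\operatorname{Cov}(W_1,h(W_1))=\mathbb E[h'(W_1)]$, with $h(w)=\sqrt{(\alpha+\tau w)^2+\beta^2}$, $\alpha>0$. The derivative is $h'(w)=\tau(\alpha+\tau w)/\sqrt{\cdot}$. Since $\alpha>0$, positive values of $\alpha+\tau w$ are more likely than the mirror-image negative ones, so $\mathbb E[h'(W_1)]>0$; this follows by pairing $w$ with $-2\alpha/\tau-w$ under the Gaussian density. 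The inner conditional covariance is thus strictly positive for every $W_\perp$, and averaging over $W_\perp$ keeps it positive. Since $\{Y\neq0\}$ has probability one, this contradicts the displayed identity, so $g^*$ is not affine.

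\textbf{Step 4 (strict MSE gap).} The affine class is a finite-dimensional linear family of square-integrable functions, so the infimum is attained by the LMMSE estimator $f^\dagger$. By orthogonality of $P-g^*(Z)$ to all $L^2$ functions of $Z$,
\[
\mathbb E\|P-f^\dagger(Z)\|^2=\mathbb E\|P-g^*(Z)\|^2+\mathbb E\|g^*(Z)-f^\dagger(Z)\|^2.
\]
The last term is strictly positive because $g^*$ does not coincide almost surely with any affine map, and this gives the strict inequality in the statement.
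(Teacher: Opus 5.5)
Your proof is correct, and it takes a genuinely different route from the paper's.

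\textbf{The paper's route.} It first uses rotation equivariance, $(QP,QY,S)\stackrel{d}{=}(P,Y,S)$ for $Q\in SO(3)$, to show that an affine $g^*$ would have to be $\alpha y$, so $a=0$ and $b=0$. It then shows that $g^*$ does depend on $s$. Writing $V=\gamma P+U$, the Doppler reading enters the posterior given $(y,s)$ as an exponential tilt in $\|p\|$. Hence $\tfrac{d}{ds}m_y(s)=\tfrac{\gamma}{\tau^2}\operatorname{Cov}_s(e^\top P,\|P\|)$. This covariance, taken under the non-Gaussian posterior, is shown to be positive using the von Mises--Fisher conditional law of the direction given $\|P\|$ together with Chebyshev's association inequality.

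\textbf{Your route.} You replace both prongs by conditioning on $Y$ alone. You get $b=0$ by comparing the linear $\mathbb E[P\mid Y]=cY$ with the non-affine term $b\kappa\,\mathbb E[\|P\|\mid Y]$. You get the dependence on $S$ from the identity $\kappa\operatorname{Cov}(P,\|P\|\mid Y)=0$, which you refute under the explicit Gaussian $\mathcal N(cy,\tau^2I_3)$ by Stein's identity.

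\textbf{What your route buys: economy and some generality.} You never need the posterior density given $(Y,S)$, differentiation under the integral sign, or the Langevin function. You use only the conditional mean $\mathbb E[S\mid P,Y]=\kappa\|P\|$. So your argument goes through without isotropy of $\operatorname{Cov}(V)$, which the paper's equivariance step and its claim that $u(p)^\top U$ has $p$-independent variance both require. It also goes through without Gaussianity of $\varepsilon$: any centred, square-integrable noise independent of everything else suffices.

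\textbf{What the paper's route buys: structural information.} It identifies the form $g^*(y,s)=m(\|y\|,s)\,y/\|y\|$ and shows that $m$ moves strictly monotonically in $s$, with the sign of $\gamma$. That explains how the Doppler reading corrects the range estimate along the line of sight. Your argument certifies non-affinity, and that $S$ is informative, without describing the form of the dependence.

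\textbf{Minor tidying.} In Step 2, state that the a.s.\ identity in $Y$ extends to all $y$ because $m$ is $c$-Lipschitz and $Y$ has an everywhere-positive density. The final orthogonality step is the same in both proofs.
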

\begin{Corollary}[LiDAR model mismatch]

Similarly, consider a classical LiDAR measurement model. A LiDAR sensor provides a noisy
 observation pair  $( r_z, \theta_z)$.
The quantity $r_z$ denotes the measured distance from the sensor to the target
\( r_z = \|X\| + V_r \),
where $\|X\|$ is the true distance of the target and $V_r$ is the range measurement noise.
The quantity $\theta_z$ denotes the angle of the target relative to the sensor:
\( \theta_z = \Theta + V_\theta \), where $\Theta$ is the true angle of the target and $V_\theta$ is the angular measurement noise. To convert these polar measurements into Cartesian coordinates, we apply the standard transformation.
\(
Z := (Z_x,Z_y)^\top 
= \bigl(r_z\cos\theta_z,\; r_z\sin\theta_z\bigr)^\top .
\)
Let $\mathcal{F}_{\mathrm{aff}}:=\{f:\mathbb{R}^2\to\mathbb{R}^2\mid f(z)=a+Bz,\ a\in\mathbb{R}^2,\ B\in\mathbb{R}^{2\times 2}\}$
denote the affine estimator class (which includes Kalman-style Gaussian measurement updates when applied to Cartesian measurements),
and let $g^*(z):=\mathbb{E}[X\mid Z=z]$ be the Bayes/MMSE estimator. Then there exists an estimator with strictly smaller mean squared
error than any affine estimator:
\( 
\inf_{f\in\mathcal{F}_{\mathrm{aff}}}\mathbb{E}\bigl[\|X-f(Z)\|^2\bigr]
\;>\;
\mathbb{E}\bigl[\|X-g^*(Z)\|^2\bigr].
\label{eq:strict_mse_gap_lidar}
\)
\end{Corollary}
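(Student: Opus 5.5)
The argument reduces the strict MSE gap to showing that $g^*$ is not affine. Since $X\in L^2$, write $f=g^*+(f-g^*)$. Because $X-g^*(Z)$ is orthogonal to every square-integrable function of $Z$, we get
\[
\mathbb E\|X-f(Z)\|^2=\mathbb E\|X-g^*(Z)\|^2+\mathbb E\|g^*(Z)-f(Z)\|^2 .
\]
Take the infimum over the affine class. It is attained by the linear MMSE (Wiener) estimator $f_{\rm L}(z)=\mathbb E X+\operatorname{Cov}(X,Z)\operatorname{Cov}(Z)^{-1}(z-\mathbb E Z)$, assuming the second moments are finite and $\operatorname{Cov}(Z)$ is nonsingular. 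The gap equals $\mathbb E\|g^*(Z)-f_{\rm L}(Z)\|^2$. This is strictly positive unless $g^*=f_{\rm L}$ almost surely, that is, unless $g^*$ is affine on the support of $Z$.

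It remains to show that $g^*$ is not affine. The statement leaves the prior on $X$ and the noise laws unspecified, so we fix the natural idealized setting, as in the Doppler corollary. Take $X$ with a continuous density, for instance isotropic Gaussian $\mathcal N(\mu,\sigma_X^2 I_2)$ with $\mu\neq 0$ or $\mu=0$. Take $V_r\sim\mathcal N(0,\sigma_r^2)$ and $V_\theta\sim\mathcal N(0,\sigma_\theta^2)$ with $\sigma_\theta>0$, independent of each other and of $X$. The cleanest route is to exhibit a moment identity that every affine estimator must satisfy and that $g^*$ violates.

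Suppose $g^*(z)=a+Bz$ almost surely. Then $\mathbb E[X\mid Z]=a+BZ$. This gives a strong structural constraint: for every bounded function $\phi$,
\[
\mathbb E[(X-a-BZ)\phi(Z)]=0 .
\]
I would test this with rotational symmetry. Take $\mu=0$ and an isotropic prior. The joint law of $(X,Z)$ is then invariant under simultaneous rotations $R_\alpha$, since $\Theta$ shifts by $\alpha$ and $\theta_z$ shifts by $\alpha$. Hence $g^*(R_\alpha z)=R_\alpha g^*(z)$, and $g^*(z)=h(\|z\|)\,z/\|z\|$ for some scalar function $h$. Equivariance forces $a=0$ and $B$ to commute with all rotations, so $B=\beta I+\gamma J$. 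Reflection symmetry $\theta\mapsto-\theta$ then kills $\gamma$. So an affine $g^*$ must have the form $h(r)=\beta r$, meaning $\mathbb E[\|X\|\cos(V_\theta)\text{-type projection}\mid r_z]$ would be exactly linear in $r_z$.

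Compute explicitly:
\[
h(r)=\mathbb E\!\left[\|X\|\cos V_\theta\mid r_z=r\right]=\mathbb E[\cos V_\theta]\;\mathbb E[\|X\|\mid r_z=r],
\]
using independence of the angular and radial parts under isotropy. Here $\|X\|$ is Rayleigh distributed and $r_z=\|X\|+V_r$. Linearity would require $\mathbb E[\|X\|\mid r_z=r]=c\,r$ for all $r$. This fails. As $r\to-\infty$, which has positive density since $V_r$ is Gaussian, the conditional mean of the nonnegative variable $\|X\|$ stays nonnegative, while $c\,r\to-\infty$ if $c>0$. The case $c\le 0$ contradicts positive correlation between $\|X\|$ and $r_z$. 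If one prefers $\sigma_r=0$, use instead that $\mathbb E[\cos V_\theta]<1$ strictly, because $\sigma_\theta>0$. Then $h(r)=e^{-\sigma_\theta^2/2}r$ is linear, so in that degenerate case nonaffineness must come from the range noise. We therefore need $\sigma_r>0$, which we assume.

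The main obstacle is that the corollary, as stated, does not pin down distributions. Some degenerate choices, such as no noise or a point-mass $X$, make $g^*$ trivially affine or even exact. So the real work is choosing and stating the nondegenerate assumptions above and verifying the conditional-independence factorization of $h$ rigorously via the symmetry argument. I would handle the tail/positivity contradiction carefully, using that the Rayleigh-plus-Gaussian posterior mean is bounded below by $0$ while any nonzero linear function is unbounded below.
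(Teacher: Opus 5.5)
Your reduction matches the paper's: the orthogonality decomposition, then rotation equivariance forcing an affine $g^*$ to have the form $\beta z$, so that it suffices to show the radial profile is not linear. Your remark that $\sigma_r>0$ is necessary is also correct. The gap is in your formula for the radial profile.

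You write $h(r)=\kappa_\theta\,\mathbb E[\|X\|\mid r_z=r]$ with $\kappa_\theta=\mathbb E[\cos V_\theta]$, which identifies $\|Z\|$ with $r_z$. With Gaussian $V_r$, however, $r_z<0$ with positive probability, and then $Z=r_z u_{\theta_z}=|r_z|\,u_{\theta_z+\pi}$. The Cartesian measurement reveals only $|r_z|$ and the line through $u_{\theta_z}$. A point $z=Ru_\beta$ with $R>0$ is produced both by $(r_z,\theta_z)=(R,\beta)$ and by $(-R,\beta+\pi)$. On the second branch $u_\beta^\top X=-\|X\|\cos V_\theta$. Carrying this through, and using that the direction of $Z$ is independent of $(\|X\|,V_r,V_\theta)$ under isotropy, gives
\[
h(R)=\kappa_\theta\,\mathbb E\bigl[\operatorname{sign}(r_z)\,\|X\| \mid |r_z|=R\bigr],\qquad R>0 .
\]
This is the two-branch ratio of integrals in the paper's proof, and it differs from your expression even for $R>0$.

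Your tail argument as $r\to-\infty$ therefore has no content, because $h$ is a function of $\|z\|\ge 0$. The only meaningful signed profile $r\mapsto g^*(ru_\beta)^\top u_\beta$ is odd, since rotation by $\pi$ gives $g^*(-z)=-g^*(z)$, so it is negative for $r<0$. Nonnegativity of $\mathbb E[\|X\|\mid r_z=r]$ at negative $r$ therefore says nothing about $g^*$.

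A sign or boundedness comparison cannot be rescued on $R>0$ either. Since $\mathbb E[\operatorname{sign}(r_z)\mid \|X\|,|r_z|=R]=\tanh(R\|X\|/\sigma_r^2)\ge 0$, the true profile satisfies $h\ge 0$ and $h(R)\to 0$ as $R\downarrow 0$, exactly as $\beta R$ does with $\beta>0$.

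What you need is a quantitative slope comparison, which is how the paper concludes. For $X\sim\mathcal N(0,\sigma_X^2 I_2)$:
\begin{itemize}
\item Near the origin, $\lim_{R\downarrow 0}h(R)/R=2\kappa_\theta\sigma_X^2/(\sigma_X^2+\sigma_r^2)$. Here the sign of $r_z$ is nearly undetermined and $\tanh(R\|X\|/\sigma_r^2)\approx R\|X\|/\sigma_r^2$.
\item At infinity, $\lim_{R\to\infty}h(R)/R=\kappa_\theta\sigma_X^2/(\sigma_X^2+\sigma_r^2)$. The negative-range branch becomes exponentially negligible and the posterior of $\|X\|$ concentrates near $R\sigma_X^2/(\sigma_X^2+\sigma_r^2)$.
\end{itemize}
The two limits differ by a factor of $2$, so $h$ is not linear and $g^*$ is not affine. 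With this replacing your last step, the rest of your argument goes through.
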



Even in idealized settings with linear dynamics and Gaussian priors, realistic sensing models induce a posterior whose conditional mean $\mathbb{E}[X \mid Z]$ is intrinsically nonlinear. This nonlinearity arises from geometric dependencies in the observation process (e.g., radial or angular measurements), and cannot be captured by any affine estimator. As a result, no parameter choice within the Kalman Filter family can achieve Bayes-optimal performance—the limitation is \emph{structural rather than parametric}. This motivates a shift from parameter estimation to \emph{structure learning}, where the goal is to augment the Kalman update with nonlinear, data-dependent transformations of the innovation while retaining its recursive and interpretable form.

While the previous results establish a structural optimality gap in the update step under model mismatch, many practical applications ultimately require accurate next state prediction. In this setting, it is important to understand how post-update estimation errors and transition model mismatch propagate through the system dynamics. The following lemma characterizes the resulting NSP error.

\begin{lemma}[Next-state prediction error under model mismatch]
Consider \(x_{k+1} = F x_k + w_k\) and \(\hat{x}_{k+1|k} = \tilde{F}\hat{x}_{k|k}\). Let \(e_k := x_k - \hat{x}_{k|k}\) and \(\eta_k := (F-\tilde{F})\hat{x}_{k|k} + w_k\), so that \(x_{k+1}-\hat{x}_{k+1|k} = F e_k + \eta_k\). In our setting the NSP error sequence satisfies \(e_{k+1} = F e_k + \eta_k\), \(\|F^j\|\le C\lambda^j\) for some \(C>0\), \(0<\lambda<1\), and \(\sup_k \mathbb{E}\|\eta_k\|^2 \le M_\eta\). Then, for all \(k\ge 0\),
\[\tiny
\mathbb{E}\|x_{k+1}-\hat{x}_{k+1|k}\|^2
\le
4\|F\|^2 C^2 \lambda^{2k}\,\mathbb{E}\|e_0\|^2
+
\left(\frac{4\|F\|^2 C^2}{(1-\lambda)^2}+2\right)M_\eta,
\]
and consequently
\[ \tiny
\limsup_{k\to\infty}\mathbb{E}\|x_{k+1}-\hat{x}_{k+1|k}\|^2
\le
\left(\frac{4\|F\|^2 C^2}{(1-\lambda)^2}+2\right)M_\eta.
\]
\end{lemma}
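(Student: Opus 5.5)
The plan is to unroll the error recursion, bound the transient and accumulated-noise parts separately, and then combine them with the elementary inequality $\|a+b\|^2\le 2\|a\|^2+2\|b\|^2$. The constants in the statement suggest this inequality is applied twice.

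\textbf{Step 1 (unrolling).} Iterating $e_{j+1}=Fe_j+\eta_j$ gives
\[
e_k = F^k e_0 + \sum_{j=0}^{k-1} F^{k-1-j}\eta_j .
\]
Hence the NSP error $x_{k+1}-\hat{x}_{k+1|k}=Fe_k+\eta_k$ equals
\[
F^{k+1}e_0 + \sum_{j=0}^{k-1} F^{k-j}\eta_j + \eta_k .
\]
It is cleaner to keep the factor $F$ outside, writing the error as $F(A_k+B_k)+\eta_k$ with $A_k:=F^k e_0$ and $B_k:=\sum_{j=0}^{k-1}F^{k-1-j}\eta_j$.

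\textbf{Step 2 (splitting).} First apply $\|a+b\|^2\le 2\|a\|^2+2\|b\|^2$ to separate $\eta_k$:
\[
\mathbb{E}\|x_{k+1}-\hat{x}_{k+1|k}\|^2 \le 2\|F\|^2\,\mathbb{E}\|A_k+B_k\|^2 + 2M_\eta .
\]
Applying the inequality again to $A_k+B_k$ yields
\[
\mathbb{E}\|x_{k+1}-\hat{x}_{k+1|k}\|^2 \le 4\|F\|^2\,\mathbb{E}\|A_k\|^2 + 4\|F\|^2\,\mathbb{E}\|B_k\|^2 + 2M_\eta .
\]
This reproduces the leading factors $4\|F\|^2$ and the additive $2M_\eta$ in the statement.

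\textbf{Step 3 (transient term).} From $\|F^k\|\le C\lambda^k$ we get $\mathbb{E}\|A_k\|^2\le C^2\lambda^{2k}\,\mathbb{E}\|e_0\|^2$. This gives exactly the first term of the bound.

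\textbf{Step 4 (accumulated noise; the main obstacle).} The $\eta_j$ are not assumed independent or zero-mean, since $\eta_j$ contains $(F-\tilde F)\hat x_{j|j}$. So cross terms in $\mathbb{E}\|B_k\|^2$ cannot be dropped. Instead I would use the triangle inequality in $L^2$ (Minkowski), then the uniform second-moment bound:
\[
\big(\mathbb{E}\|B_k\|^2\big)^{1/2} \le \sum_{j=0}^{k-1}\|F^{k-1-j}\|\,\big(\mathbb{E}\|\eta_j\|^2\big)^{1/2} \le C\sqrt{M_\eta}\sum_{i\ge0}\lambda^i = \frac{C\sqrt{M_\eta}}{1-\lambda}.
\]
Squaring gives $\mathbb{E}\|B_k\|^2\le C^2M_\eta/(1-\lambda)^2$. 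The squared geometric sum $(1-\lambda)^{-2}$, rather than $(1-\lambda^2)^{-1}$, is precisely what the stated bound exhibits, which confirms that this correlation-robust route is the intended one. An equivalent alternative is Cauchy–Schwarz with weights $\lambda^{i}$.

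\textbf{Step 5 (conclusion).} Combining Steps 2–4 gives the displayed inequality for every $k\ge0$. For $k=0$ the sum $B_0$ is empty, so the bound holds trivially. Since $0<\lambda<1$, the term $\lambda^{2k}\to0$, and taking $\limsup_{k\to\infty}$ yields the asymptotic bound.
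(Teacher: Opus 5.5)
Your proposal is correct and follows the paper's proof: the paper first bounds the NSP error by $2\|F\|^2\mathbb{E}\|e_k\|^2+2M_\eta$, unrolls $e_k=F^ke_0+\sum_{t=0}^{k-1}F^{k-1-t}\eta_t$, and splits once more with $\|a+b\|^2\le 2\|a\|^2+2\|b\|^2$. It then bounds the accumulated-noise sum via the triangle inequality and Cauchy--Schwarz, which is your $L^2$ Minkowski step giving the $(1-\lambda)^{-2}$ factor without any independence or zero-mean assumption on the $\eta_t$.
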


The bound shows that the NSP error decomposes into a transient term that vanishes over time $k$ and a persistent term arising from errors propagated through mismatched dynamics and process noise. Since the true dynamics are generally unknown and the optimal correction is not available in closed form, parameter tuning alone may leave a persistent error term. This motivates searching over Kalman-style update rules that can partially compensate for model mismatch under the NSP objective, while preserving the recursive structure of the classical filter. We address this problem using LLM-guided evolutionary search to discover such update rules by directly optimizing long-horizon prediction performance. Appendix~\ref{Appendix:theoretical_analysis} provides formal proofs of Corollaries~1 and~2 as well as Lemma~1.

\section{Kalman Evolve}

\begin{figure}[h]
    \includegraphics[width=\linewidth]{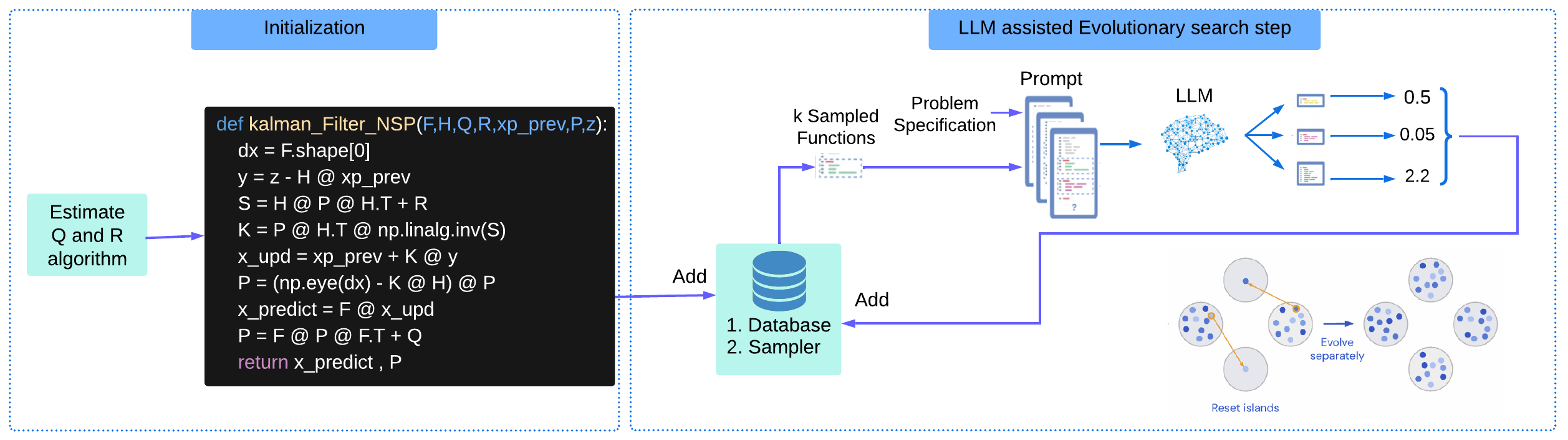}
    \caption{An overview of our framework.}
    \label{fig:Framework}
\end{figure}

We introduce Kalman Evolve, a two-stage framework that jointly optimizes noise parameters and update structure for state estimation. In the first stage, we estimate the noise covariance matrices Q and R to obtain a calibrated baseline using either least-squares estimation or the Optimized Kalman Filter (OKF). In the second stage, we perform LLM-guided evolutionary search, where the LLM acts as a structured prior over the space of interpretable algorithmic transformations, enabling the discovery of non-affine Kalman update rules while preserving the recursive filtering framework. This formulation unifies parameter calibration and structure learning, allowing systematic improvements over classical and optimized Kalman filtering approaches.


These estimated parameters, together with the initial algorithm, are added to the database used by the LLM-guided evolutionary search. The LLM-assisted evolutionary search starts by sampling $k$ algorithms from the database. The problem specification, which formally defines the problem we try to address, is concatenated with the two candidate algorithms. This combined representation constitutes the input prompt to the LLM. A prompt example is provided in Appendix~\ref{Appendix:Prompt}. The prompt is subsequently forwarded to the LLM, which in our case is DeepSeek 16B \citep{guo2025deepseek}. The LLM processes the prompt and produces mutations and
combinations of the two input functions. After generation, each candidate solution is evaluated based on a fitness/loss function. The functions and fitness values are used to update the database that maintains the top {\it N} highest-scoring candidates. Our LLM-assisted evolutionary search is strongly inspired by AlphaEvolve ~\citep{novikov2025alphaevolve}. Since the AlphaEvolve implementation is not publicly available, we implement this procedure following the published description. As demonstrated in Figure \ref{fig:Framework}, the ES is distributed across four independent islands, each running on a separate GPU with its own database. Each island performs 10 iterations of sampling, mutation, and database updates per cycle. After each cycle, the islands with the weakest top-performing candidates are reinitialized using the best candidate from the highest-performing island. This process is repeated for 20 cycles.

\section{Kalman Evolve versus Optimized Kalman Filter}
In this section, we evaluate Kalman Evolve across a wide range of challenging scenarios, comparing it with strong classical baselines, including the Kalman Filter and the Optimized Kalman Filter. All methods are assessed on both state estimation and next state prediction tasks. Across these benchmarks, Kalman Evolve improves over optimized baselines while maintaining interpretability and low inference overhead.
In Appendix~\ref{Appendix:Additional_experiments}, we further compare against deep learning approaches such as KalmanNet and LSTM, which generally exhibit weaker performance.


\subsection{The Doppler radar problem}
\begin{figure}[h]
    \centering

    \begin{subfigure}[t]{0.19\textwidth}
        \centering
        \includegraphics[height=2.5cm,width=\linewidth]{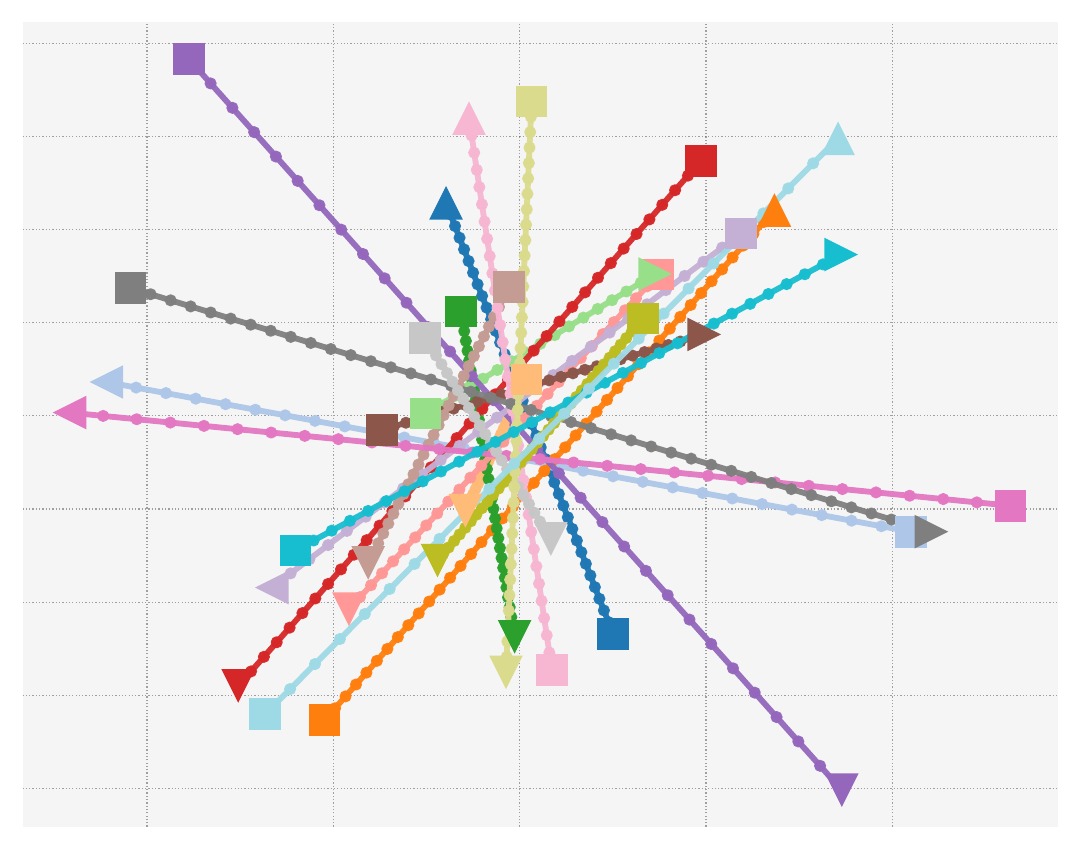}
        \caption{Toy}
    \end{subfigure}
    \hfill
    \begin{subfigure}[t]{0.19\textwidth}
        \centering
        \includegraphics[height=2.5cm,width=\linewidth]{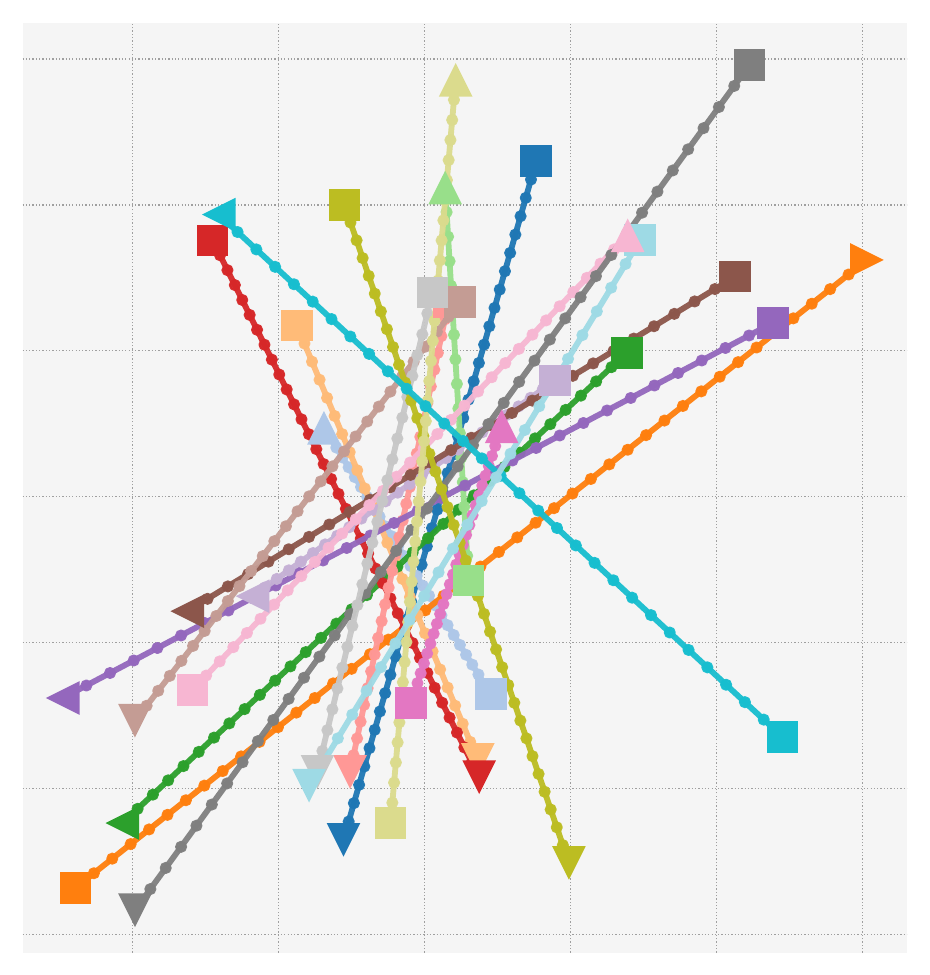}
        \caption{Close}
    \end{subfigure}
    \hfill
    \begin{subfigure}[t]{0.19\textwidth}
        \centering
        \includegraphics[height=2.5cm,width=\linewidth]{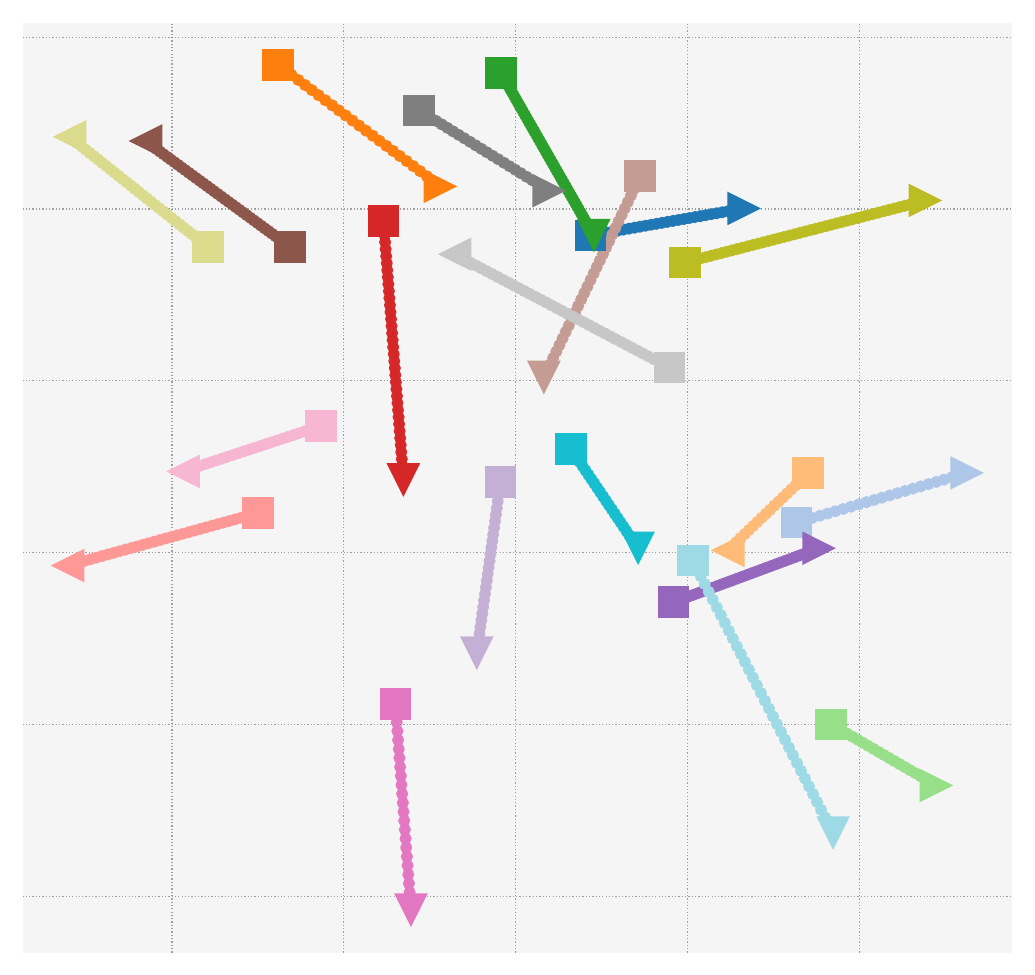}
        \caption{Const\_v}
    \end{subfigure}
    \hfill
    \begin{subfigure}[t]{0.19\textwidth}
        \centering
        \includegraphics[height=2.5cm,width=\linewidth]{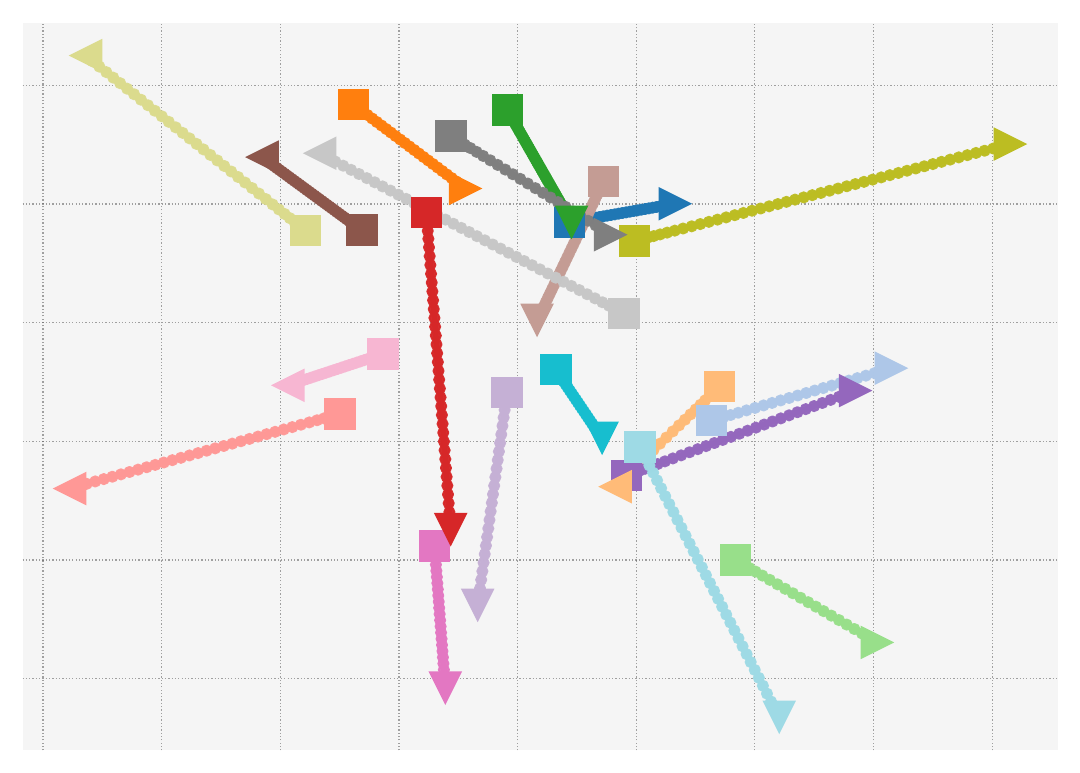}
        \caption{Const\_a}
    \end{subfigure}
    \hfill
    \begin{subfigure}[t]{0.19\textwidth}
        \centering
        \includegraphics[height=2.5cm,width=\linewidth]{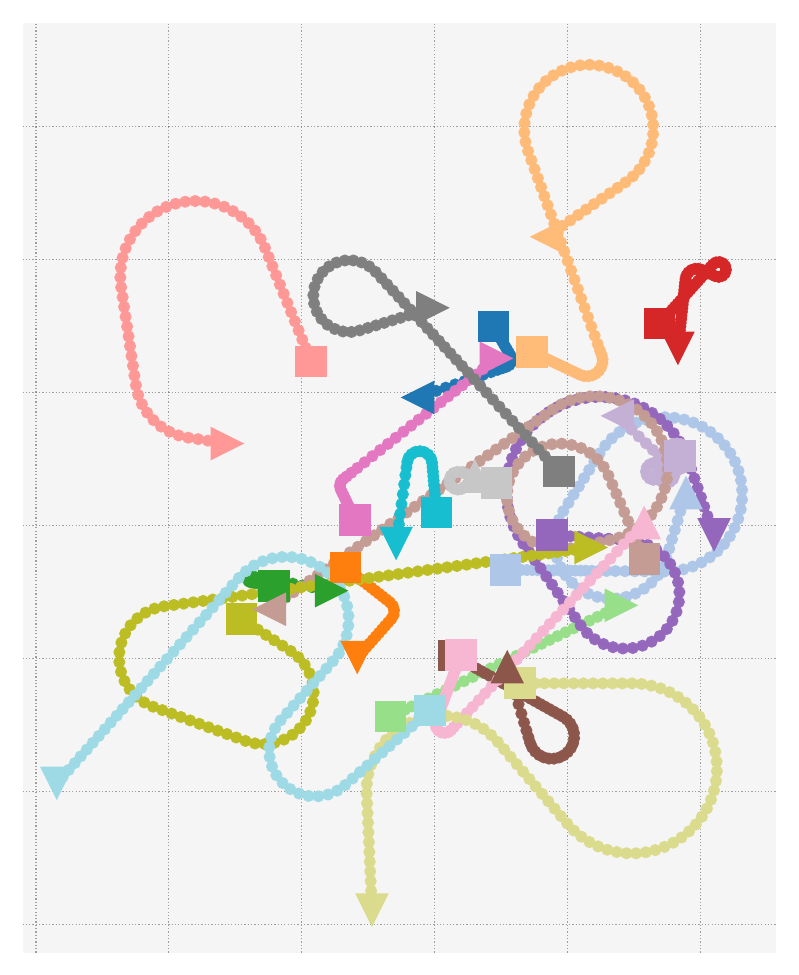}
        \caption{Free}
    \end{subfigure}

    \caption{Trajectory examples under different dynamics.}
\end{figure}
We consider a variant of the classic Doppler radar problem \citep{Pell_1989, Mitra}, where multiple target trajectories are tracked in a homogeneous 3D space, given regular observations from a Doppler radar. The state \(
X = (x_x, x_y, x_z, x_{vx}, x_{vy}, x_{vz})^\top \in \mathbb{R}^6
\) represents the 3D position and velocity of a target. The goal is to minimize the RMSE over the three position coordinates. While the true dynamics $F$ are unknown to the Kalman Filter (KF), a constant-velocity approximation $\tilde{F}$ is used. Observations $Z \in \mathbb{R}^4$ consist of the target’s position in spherical coordinates (range, azimuth, elevation) along with the radial velocity (Doppler measurement), corrupted by additive i.i.d. Gaussian noise. After transformation to Cartesian coordinates, the observation model can be written as:
\[
\tiny\tilde{F} =
\begin{pmatrix}
1 &  &   & 1 &   &   \\
  & 1 &   &  & 1  &   \\
  &   & 1 &   &  &  1 \\
  &   &   & 1 &   &   \\
  &   &   &   & 1 &   \\
  &   &   &   &   & 1
\end{pmatrix}
, \,\,\,\,\,\,
H = H(X) =
\begin{pmatrix}
1 &   &   &   &   &   \\
  & 1 &   &   &   &   \\
  &   & 1 &   &   &   \\
 &  &  & \frac{x_x}{r} & \frac{x_y}{r} & \frac{x_z}{r} 
\end{pmatrix}
\]
where \( \tiny
r = \sqrt{x_x^2 + x_y^2 + x_z^2}.
\) Since $H = H(X)$ relies on the unknown location $(x_x, x_y, x_z)$, we instead substitute $\tilde{H} := H(Z)$ in the KF update step in Figure \ref{fig:Kalman_filter}. 

We evaluate our method on a suite of benchmarks spanning a range of motion complexities, from simplified settings to more realistic scenarios. These benchmarks are designed to isolate key properties such as anisotropy, noise structure, and motion dynamics, enabling controlled comparisons across different levels of difficulty. A detailed description of each benchmark and its specific properties is provided in Appendix~\ref{Appendix:Doppler}.

\begin{table}[H]
\centering
\begin{tabular}{|l|c|c|c|c|}
\hline
 & KF & OKF & KE  & Observations \\
\hline
Toy      & 109.24 & 78.31  &  \textbf{78.03} & 173.10 \\
Close    & 19.71  & 19.73  & \textbf{18.62} & 44.19  \\
Const\_v & 85.81  & 83.5 & \textbf{73.95} & 198.41 \\
Const\_a & 95.57 & 91.78 & \textbf{81.63} & 216.58 \\
Free     & 101.90 & 95.72 &  \textbf{84.23} & 186.42 \\
\hline
\end{tabular}
\caption{Performance comparison of filtering methods across benchmarks in terms of RMSE. Lower values indicate better performance; the best result for each benchmark is highlighted in bold.}\label{tab:kalman_results_Dopler}
\end{table}
Table~\ref{tab:kalman_results_Dopler} shows that KE consistently outperforms OKF across all benchmarks, with gains increasing as task complexity grows. In the Toy setting, the improvement is marginal (0.4\%), while in Close, KE achieves a more noticeable gain of 5.6\%. For more structured dynamics, the improvements become substantial: KE reduces the error by 11.4\% in Const\_v and 11.1\% in Const\_a relative to OKF. In the most challenging Free setting, the gain reaches 12.0\%. Overall, these results indicate modest improvements in simple settings and consistent gains of around 10--12\% in more complex regimes. These results suggest that KE preserves the strengths of OKF while adding useful modeling capacity when the dynamics deviate from linear or stationary assumptions. The advantage is most pronounced in scenarios involving acceleration and nonlinear motion. In Appendix~\ref{Appendix:Doppler}, we report performance in terms of NSE, evaluate additional baselines, provide a comprehensive out-of-distribution generalization study, and analyze how performance scales with the number of training iterations.
\subsection{LiDAR-based State Estimation }
In line with the OKF paper, we consider the problem of state estimation in self-driving scenarios based on LiDAR measurements with respect to known landmarks \citep{MOREIRA2020512}. We simulate driving trajectories composed of multiple segments with varying accelerations and turning radii. The state consists of the vehicle’s 2D position and velocity, while $\tilde{F}$ is modeled using a constant-velocity assumption. In addition, we experiment with the NCLT dataset \citep{10.1177/0278364915614638}. For both datasets, the observations (both true $H$ and modeled $\tilde{H}$) correspond to the vehicle’s position, corrupted by additive i.i.d.\ Gaussian noise in polar coordinates. This results in the following model:
\small
\[
\tiny\tilde{F} =
\begin{pmatrix}
1 & 0 & 1 & 0 \\
0 & 1 & 0 & 1 \\
0 & 0 & 1 & 0 \\
0 & 0 & 0 & 1
\end{pmatrix}, \,\,\,\,\,
\tilde{H} = H =
\begin{pmatrix}
1 & 0 & 0 & 0 \\
0 & 1 & 0 & 0
\end{pmatrix}.
\]

Table~\ref{tab:kalman_results_LiDAR} reports the performance of all methods. On the synthetic dataset, the model discovered by Kalman Evolve improves over OKF by approximately 5\% in state estimation and 6\% in next state prediction. The NCLT dataset contains 27 trajectories (21/3/3 train/validation/test) and is relatively small and noisy. Nevertheless, KE improves over OKF by approximately 2\% in SE and 3\% in NSP, suggesting that the discovered update transfers beyond the synthetic setting. Notably, OKF already provides substantial improvements over KF, highlighting the benefit of parameter tuning, while KE yields additional consistent gains.\begin{table}[H]
\centering
\begin{tabular}{|l|c|c|c|c|}
\hline
 & KF & OKF & KE  & Observations \\
\hline
synthetic SE      & 12.70 & 11.16&  \textbf{10.52} & 18.15  \\
synthetic NSP    & 27.80  & 23.94  & \textbf{22.72} & 76.53  \\
NCLT SE & 26.3  &  23.0 & \textbf{22.6} & 47.2 \\
NCLT NSP  & 26.8 & 23.1 & \textbf{22.5}  & 50 \\
\hline
\end{tabular}
\caption{Performance comparison of filtering methods across LiDAR benchmarks in terms  of RMSE.}\label{tab:kalman_results_LiDAR}
\end{table}
Additional details on the datasets and results—including test-time analysis, statistical evaluation, and performance as a function of search budget—are provided in Appendices~\ref{Appendix:LiDAR} and~\ref{Appendix:NCLT}.
\newpage
\subsection{Pedestrian Tracking}

\begin{figure}[h]
\begin{minipage}{0.38\textwidth}
\small

The MOT20 dataset~\citep{dendorfer2020mot20} contains videos of real-world targets (primarily pedestrians, as shown in Figure~\ref{fig:mot20figure}), along with their ground-truth location and size in each frame. Since the dataset does not provide noisy observations, state estimation is not applicable; instead, we focus on next state prediction. The objective is to predict the target location in the next frame. The state space includes the 2D position, size, and velocity, while the observations consist only of position and size. The underlying dynamics $F$ are unknown, and a standard constant-velocity model is used for $\tilde{F}$. This leads to the following model:\[
\tilde{F} =
\begin{pmatrix}
1 & 0 & 0 & 0 & 1 & 0 \\
0 & 1 & 0 & 0 & 0 & 1 \\
0 & 0 & 1 & 0 & 0 & 0 \\
0 & 0 & 0 & 1 & 0 & 0 \\
0 & 0 & 0 & 0 & 1 & 0 \\
0 & 0 & 0 & 0 & 0 & 1 \\
\end{pmatrix},
\]
\end{minipage}
\hfill
\begin{minipage}{0.55\textwidth}
\begin{center}
\footnotesize
    \vspace{-65pt}

\textcolor{KEcolor}{\rule{1em}{.5em}}\, KE \quad
\textcolor{OKFcolor}{\rule{1em}{.5em}}\, OKF \quad
\textcolor{KFcolor}{\rule{1em}{.5em}}\, KF \quad
\end{center}
\includegraphics[width=0.98\linewidth]{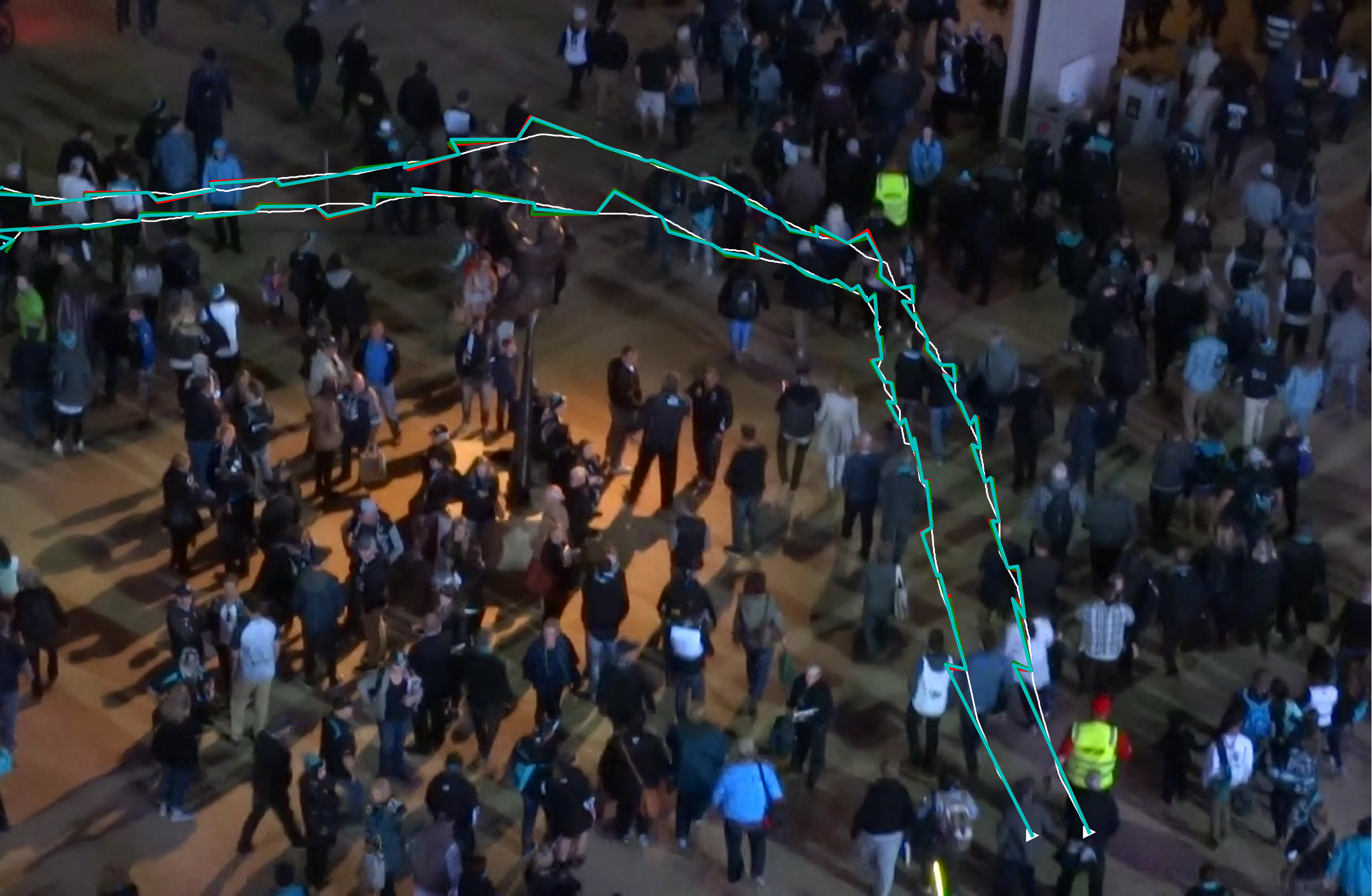}
\caption{Visualization of two trajectories in the first frame of the MOT20 test video with predictions from KF, OKF, and KE.}
\label{fig:mot20figure}
\end{minipage}

\label{fig:performance}
\end{figure}

\begin{figure}[h]
\noindent
\begin{minipage}[t]{0.38\textwidth}
\vspace{-17pt}

\small
\[
\tilde{H} = H =
\begin{pmatrix}
1 & 0 & 0 & 0 & 0 & 0 \\
0 & 1 & 0 & 0 & 0 & 0 \\
0 & 0 & 1 & 0 & 0 & 0\\
0 & 0 & 0 & 1 & 0 & 0\\
\end{pmatrix}.
\]

Our observation model satisfies $\tilde{H} = H$ and is \emph{linear}, i.e., $H$ is independent of $X$. In our setup, the first three videos (1117 trajectories) are used for training, and the final video (1208 trajectories) for testing. 
Figure~\ref{fig:mot20figure} illustrates two example pedestrian trajectories, where small but consistent differences between methods can be observed. As shown in Figure~\ref{fig:Mot_performance}, KE reduces the test RMSE relative to OKF by 8\%, with high statistical significance.


\end{minipage}
\hfill
\begin{minipage}[t]{0.55\textwidth}
\centering
\footnotesize
\vspace{-42pt}
\vspace{0pt}

\begin{minipage}[t]{0.48\linewidth}
    \centering
    \includegraphics[width=\linewidth]{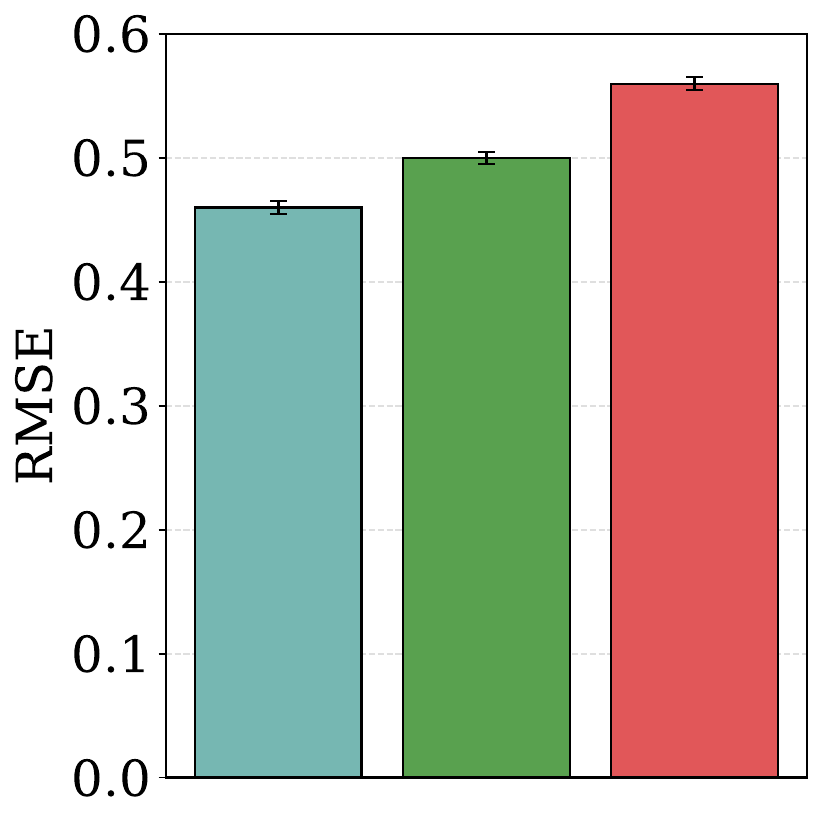}
\end{minipage}
\hfill
\begin{minipage}[t]{0.5\linewidth}
    \centering
    \vspace{-125pt}
    \hspace*{3mm}{\small\bfseries\kern0.3pt\rmfamily z-value (KE-OKF) 13.4\,[\%]}

    \includegraphics[width=\linewidth]{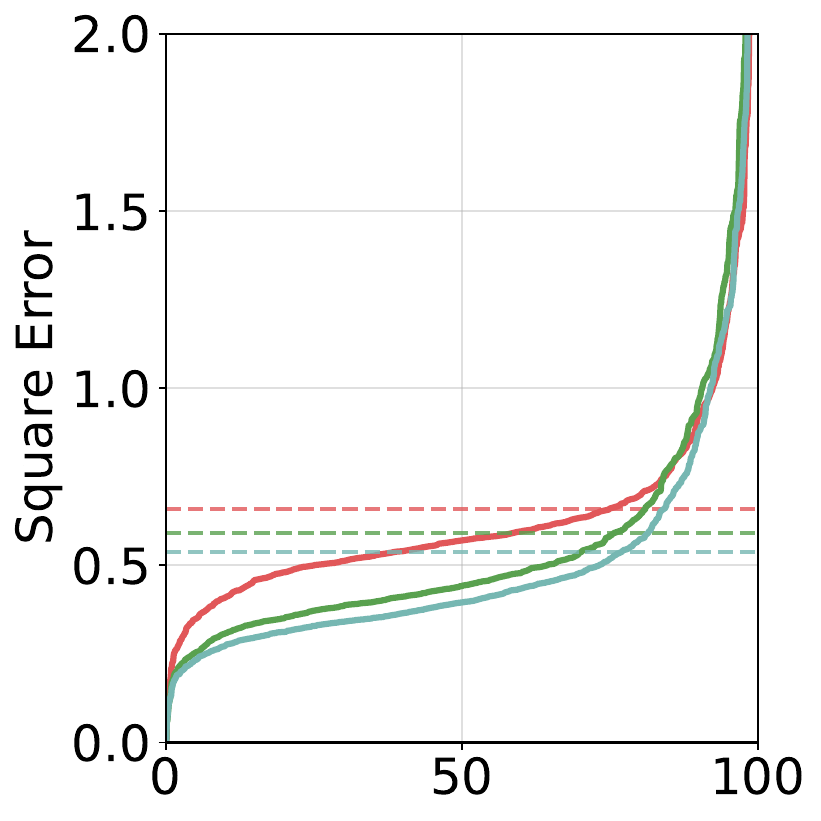}
    \vspace{-12pt}
    
    \hspace*{8mm}{\small\bfseries\kern0.3pt\rmfamily quantile\,[\%]}
    
\end{minipage}
\caption{Performance for next state prediction (left plot) z-test (right plot). The dashed lines correspond to MSE. Both $z$-values correspond to 
$p$-value $< 10^{-6}$. Each $z$-value is calculated over $N$ test 
trajectories as follows:
$z = \frac{\mathrm{mean}(\{\Delta_i\})}{\mathrm{std}(\{\Delta_i\})} 
\sqrt{N},
\qquad
\text{where } 
\Delta_i = \mathrm{err}_i(KF)^2 - \mathrm{err}_i(OKF)^2$
is the square-error difference on trajectory $i$, $1 \le i \le N$.}
\label{fig:Mot_performance}
\end{minipage}
\end{figure}

\begin{figure}[h]
\begin{minipage}{0.38\textwidth}
\small
\vspace{-30pt}

Figure~\ref{fig:Performance_per_samples_mot} illustrates how performance evolves with the number of training samples. KE consistently achieves the lowest RMSE and remains largely insensitive to dataset size. In contrast, OKF benefits significantly from additional training data, showing a sharp improvement at small sample sizes followed by a plateau, while KF performs worst overall and is largely unaffected by increasing data. Overall, KE demonstrates superior and stable performance across all settings, outperforming both KF and OKF regardless of the amount of training data. In Appendix~\ref{Appendix:mot_20_prerformance}, we report additional performance results along with the corresponding test times.
\vspace{-27pt}

\end{minipage}
\hfill
\begin{minipage}{0.55\textwidth}
\begin{center}
\footnotesize
\vspace{-17pt}
\textcolor{KEcolor}{\rule{1em}{.5em}}\, KE \quad
\textcolor{OKFcolor}{\rule{1em}{.5em}}\, OKF \quad
\textcolor{KFcolor}{\rule{1em}{.5em}}\, KF \quad
\includegraphics[width=.9\linewidth, height=0.21\textheight]{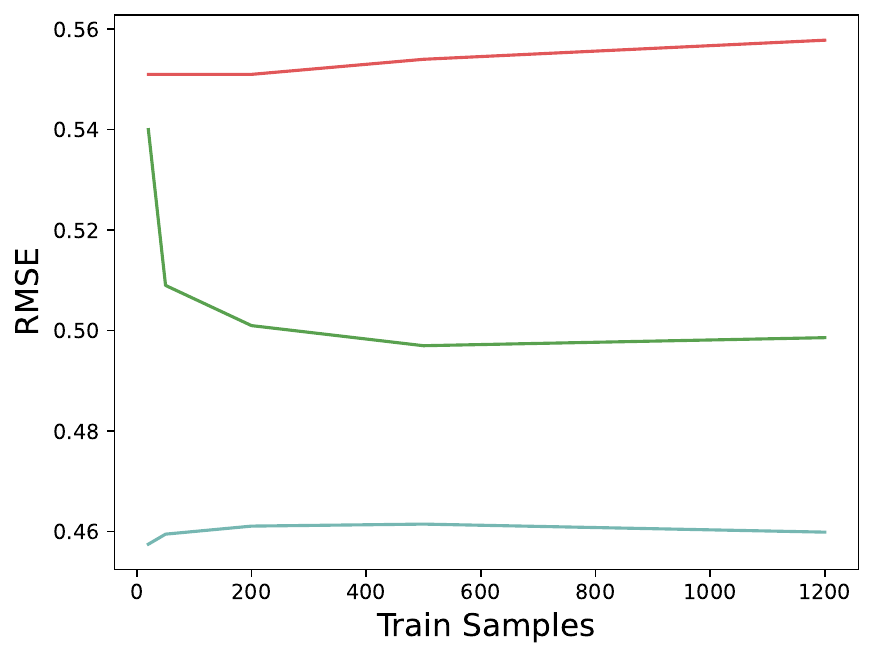}
\caption{Performance as a function of samples}
\label{fig:Performance_per_samples_mot}
\end{center}
\vspace{-20pt}
\end{minipage}
\label{fig:performance}
\end{figure}

\newpage
\section{Discussion}
In this section, we discuss the properties of Kalman Evolve, including its performance, interpretability, limitations, and computational complexity, in comparison to both classical and learning-based Kalman filtering methods.

\textbf{Deep learning based Kalman Filters:} In Appendix \ref{Appendix:Additional_experiments}, we benchmark KalmanNet against all considered approaches under extensive hyperparameter tuning. We find that it exhibits inconsistent performance while incurring significantly higher computational cost than the classical Kalman Filter and often fails to outperform simpler models such as LSTM. Our results are consistent with prior work \citep{NEURIPS2024_f1cf02ce}. The OKF study evaluated a linear Kalman Network, which frequently outperformed the classical Kalman Filter. Nevertheless, it remained inferior to OKF and, in certain cases, even to the Kalman Filter. Table~\ref{tab:full_LiDAR_table_main} reports the performance of all methods on the LiDAR synthetic dataset, a setting in which deep learning approaches perform relatively well compared to other benchmarks. Nevertheless, their performance remains inferior to that of the Kalman Filter.

\begin{table}[h]
\centering
\setlength{\tabcolsep}{4.3pt}   
\begin{tabular}{|c|c|c|c|c|c|c|c}
\hline
 & \begin{tabular}{c}KF\end{tabular}
 & \begin{tabular}{c}OKF\end{tabular}
 & \begin{tabular}{c}KE\end{tabular}
 & LSTM
 & \begin{tabular}{c}KN\end{tabular}
 & OBS \\
\hline
SE 
& 12.70 $\pm$ 0.1
& 11.16 $\pm$ 0.2
& {\bf 10.52}  $\pm$ 0.2
& 20.22 $\pm$ 0.1£
& 14.81 $\pm$ 0.1
& 18.15 $\pm$ 0.3\\
\hline

NSP 
& 27.80 $\pm$ 0.2
& 23.94 $\pm$ 0.2
& \textbf{22.40} $\pm$ 0.2
& 31.52  $\pm$ 0.2
& 29.62  $\pm$ 0.2
& 76.53 $\pm$ 0.4\\

\hline
\end{tabular}

\caption{Root mean squared error performance of different methods $\pm$ stderr for LiDAR tracking }
\label{tab:full_LiDAR_table_main}
\end{table}

\textbf{Interpretability of our algorithms:} 
The learned algorithm preserves the standard Kalman filtering operations, including the innovation $y = z - Hx^-$, which measures the residual between observation and prediction, and the Kalman gain $K = P H^T (H P H^T + R)^{-1}$, which determines the relative weighting of the measurement.  The state update is then modified through a scalar gating mechanism, yielding $x^+ = x^- + \text{gate} \cdot K y$, which adaptively modulates the magnitude of the correction based on the residual. Similarly, the covariance update becomes $P^+ = (I - \text{gate} \cdot K H)P$, preserving its structure while adapting confidence. In the prediction step, the standard propagation $x^- = F x^+$ is unchanged, while the covariance prediction is modified by scaling the process noise, $P^- = F P^+ F^T + \alpha Q$, enabling adaptive uncertainty estimation. These modifications preserve the semantic roles of the classical filter while introducing simple, interpretable mechanisms for adaptive correction and uncertainty modulation. In Appendix~\ref{Appendix:Doppler}, we provide more examples of the discovered algorithms, showing that they consistently follow a simple structure based on nonlinear residual transformations and adaptive rescaling.

\begin{figure}[h]
\centering
\noindent
\begin{minipage}[t]{0.70\textwidth}
  \begin{pythonbox}[height=4.9cm]
  \begin{lstlisting}[style=pythonstyle]
def mot_algorithm(xp_prev, F, P, Q, z, R, H):
    y = z - H @ xp_prev
    S = H @ P @ H.T + R + 1e-10 * np.eye(H.shape[0])
    inv_S = np.linalg.inv(S)
    K = (P @ H.T) @ inv_S
    gate_input = np.mean(y**4) + np.std(y**2)**2
    gate = 0.5 * (1 + np.tanh(gate_input))
    x_upd = xp_prev + gate * K @ y
    P_upd = (np.eye(F.shape[0]) - gate * K @ H) @ P
    scaling_factor = np.tanh(np.mean(y**2) + np.std(y**2))
    P_next = F @ P_upd @ F.T + scaling_factor * Q
    xp_next = F @ x_upd
    return xp_next, P_next
  \end{lstlisting}
  \end{pythonbox}
\end{minipage}%
\caption{{ Discovered algorithm for MOT benchmark and next state prediction task.}}
\label{fig:discovered_function_mot}

\end{figure}

\textbf{Limitations: }Despite the observed performance gains, the discovered algorithms are not theoretically optimal, and further improvements remain possible. Moreover, although they are explicitly represented as text and thus potentially interpretable, the LLM-driven discovery process is not fully controllable and may yield solutions that are difficult to interpret.
 
\textbf{Discovery and inference complexity  of our algorithms: } The primary computational cost of our approach lies in the discovery phase, where algorithms are learned via LLM-assisted evolutionary search. Once discovered, the resulting algorithms have an inference cost comparable to the standard Kalman Filter, as they retain the same computational structure with only minor additional operations. Empirically, the LLM-assisted evolutionary search often converges within a few cycles in some cases, while in others it continues to improve throughout the entire search process, suggesting that additional cycles can further enhance performance. A full discovery run takes approximately 2--3 days on 4 H100 GPUs without vLLM, and 4--6 hours with vLLM, depending on the training set size.
\section{Conclusion  and future work}

We introduce Kalman Evolve, a unified machine learning framework for evolving Kalman Filters, which consistently achieves state-of-the-art performance on state estimation tasks. The proposed approach preserves the interpretable structure of the classical Kalman Filter while augmenting it with simple, data-driven mechanisms that improve robustness to noise, outliers, and model mismatch. Across a wide range of benchmarks, the discovered algorithms demonstrate strong generalization and data efficiency, with minimal additional inference cost compared to standard filtering methods. An important aspect of our framework is the separation between discovery and deployment: while the search process is computationally intensive, the resulting algorithms retain the efficiency and simplicity of classical filters. This makes them practical for real-time applications while benefiting from modern data-driven optimization. For future work, we plan to explore broader applications of Kalman filtering, including more complex real-world scenarios such as brain-machine interfaces \citep{NEURIPS2024_f1cf02ce}. Moreover, we aim to investigate the role of prior knowledge in guiding the LLM-assisted evolutionary search, in particular how structured prompts that include prior knowledge about the problem affect convergence, stability, and performance. Finally, we will extend the proposed methodology beyond filtering, applying similar principles to the discovery of wavelet representations and other structured signal processing algorithms. Overall, our results suggest that combining classical model-based structure with LLM-driven discovery provides a promising direction for developing interpretable and high-performance algorithms.
\section{Related work}
The advent of Transformers \citep{DBLP:journals/corr/VaswaniSPUJGKP17} has spurred the development of numerous Large Language Models (LLMs), such as BERT \citep{devlin-etal-2019-bert} and GPT \citep{openai2024gpt4}, revolutionizing the processing and analysis of written language. Recently, reasoning capabilities have become a central focus in the development of large language models (LLMs). In this context, DeepSeek ~\citep{guo2025deepseek} has emerged as one of the first open-source models to rival the performance of closed-source alternatives ~\citep{comanici2025gemini} by combining large-scale pretraining with reinforcement learning techniques to enhance logical inference.
As such, large language models (LLMs) typically consist of billions of parameters, posing significant computational and memory challenges. To address these limitations, techniques such as QLoRA~\citep{dettmers2023qlora} have been proposed to enable memory-efficient fine-tuning by combining low-bit (e.g., 4-bit) quantization with parameter-efficient adaptation. In parallel, efficient inference frameworks such as vLLM ~\citep{kwon2023efficient} have been developed to improve deployment efficiency by optimizing memory usage and throughput, thereby enabling scalable serving of large models in real-world applications.\\
Genetic Programming (GP) is a subfield of computer science inspired by the principles of natural evolution, in which computer programs are iteratively evolved using mechanisms such as selection, mutation, and crossover \citep{koza1992genetic,10.5555/1796422}. 
Frameworks such as Cartesian Genetic Programming (CGP) \citep{10.1145/2739482.2756571} employ a structured representation of the solution space using fixed-length directed acyclic graphs. 
GP algorithms maintain a population of candidate solutions that are iteratively refined based on random sampling and mutations as well as the performance on a pre-defined fitness function.  Despite their success across a wide range of applications, the design of mutation operators and the selection of building blocks (i.e., graph nodes) remain critical factors influencing the performance of the discovered algorithms. In particular, suboptimal choices in these components can substantially degrade the quality and efficiency of the resulting solutions. Moreover, these building blocks are typically defined at the beginning of the process and remain fixed throughout the evolutionary procedure, which may limit the adaptability of the approach and constrain the exploration of potentially more suitable representations during the search. Recently, LLMs have emerged as a promising approach for addressing this limitation.\\
 FunSearch~\citep{RomeraParedes2024} introduces a hybrid framework combining large language models with evolutionary search, to discover high-quality program variants guided by task-specific evaluation, enabling the discovery of state-of-the-art heuristics for challenging combinatorial problems such as bin packing~\citep{10.5555/2183}, the cap set problem~\citep{capset}, and the admissible sets problem~\citep{Tao_Vu_2006}. FunBO~\citep{Aglietti2024FunBODA} extended FunSearch to automatically discover acquisition functions for Bayesian optimization, achieving strong generalization and competitive or superior performance to standard and problem-specific methods.  The successor to FunSearch, AlphaEvolve~\citep{novikov2025alphaevolve}, enabled the discovery of a novel method for \(4 \times 4\) complex matrix multiplication requiring only 48 scalar multiplications, marking the first improvement over the Strassen's algorithm ~\citep{strassen1969gaussian} in over 60 years. Despite the importance and significance of algorithmic discovery through LLMs, the authors did not open-source their code. Consequently our implementation is inspired by AlphaEvolve but not equivalent to it.



\bibliography{tmlr}
\bibliographystyle{apalike}


\appendix

\section{Theoretical Analysis}
\label{Appendix:theoretical_analysis}
 In this section, we provide a formal proof of Corollaries 1 and 2 as well as Lemma 1.

\subsection{Proof of Corollary 1}

\paragraph{Corollary 1 (Position non-affinity under idealized Doppler observations).}
Let \(P,V\in\mathbb R^3\) denote the position and velocity of a target. Assume \(P,V\) are jointly Gaussian, isotropic, and position--velocity correlated:
\[
\mathbb E[P]=\mathbb E[V]=0,\qquad
\operatorname{Cov}(P)=\sigma_P^2I_3,\qquad
\operatorname{Cov}(V)=\sigma_V^2I_3,\qquad
\operatorname{Cov}(V,P)=\rho I_3,\qquad \rho\neq0 .
\]
The observation is \(Z=(Y,S)\in\mathbb R^4\), where
\[
Y=P+N,\qquad S=u(P)^\top V+\epsilon,\qquad u(p)=p/\|p\|,
\]
with \(N\sim\mathcal N(0,\sigma_N^2I_3)\), \(\epsilon\sim\mathcal N(0,\sigma_\epsilon^2)\), and all noises independent of each other and of \((P,V)\). Here \(Y\) is a noisy Cartesian position measurement and \(S\) is a noisy radial-velocity Doppler measurement. Let
\[
\mathcal F_{\mathrm{aff}}
=
\{f:\mathbb R^4\to\mathbb R^3\mid f(y,s)=a+Ay+bs,\;
a,b\in\mathbb R^3,\; A\in\mathbb R^{3\times3}\}
\]
be the class of affine position estimators from \(Z=(Y,S)\).
This representation is without loss of generality, since any affine map 
\(f(z)=a+Mz\), \(M\in\mathbb R^{3\times4}\), can be decomposed as \(Ay+bs\). Then the Bayes estimator
\[
g^\star(Z)=\mathbb E[P\mid Z]
\]
is not affine. Consequently,
\[
\inf_{f\in\mathcal F_{\mathrm{aff}}}\mathbb E\|P-f(Z)\|^2
>
\mathbb E\|P-g^\star(Z)\|^2.
\]

\paragraph{Proof.}
The conditional mean \(g^\star(Y,S)=\mathbb E[P\mid Y,S]\) is the unique minimizer of
\[
R(f)=\mathbb E\|P-f(Y,S)\|^2
\]
over all square-integrable estimators \(f(Y,S)\). We show that this minimizer cannot be affine.

Since \(P,V\) are jointly Gaussian and \(\operatorname{Cov}(V,P)=\rho I_3\), the Gaussian conditioning formula gives
\[
\mathbb E[V\mid P]
=
\operatorname{Cov}(V,P)\operatorname{Cov}(P)^{-1}P
=
\frac{\rho}{\sigma_P^2}P .
\]
Hence we may write
\[
V=\gamma P+U,\qquad \gamma:=\frac{\rho}{\sigma_P^2},
\]
where \(U:=V-\gamma P\) is Gaussian, isotropic, and independent of \(P\). Since \(\rho\neq0\), we have \(\gamma\neq0\). Therefore
\[
S
=
u(P)^\top V+\epsilon
=
u(P)^\top(\gamma P+U)+\epsilon
=
\gamma\|P\|+\eta,
\qquad
\eta:=u(P)^\top U+\epsilon .
\]
Because \(U\) is isotropic Gaussian and independent of \(P\), for every fixed \(P=p\),
\[
u(p)^\top U\sim\mathcal N(0,\sigma_U^2).
\]
Thus
\[
\eta\mid P=p\sim\mathcal N(0,\tau^2),
\qquad
\tau^2=\sigma_U^2+\sigma_\epsilon^2,
\]
and hence
\[
S\mid P=p\sim\mathcal N(\gamma\|p\|,\tau^2).
\]
Thus the Doppler measurement provides information about the range \(\|P\|\).

By Bayes' rule, for fixed \(Y=y\) and \(S=s\), the posterior density of \(P\) is proportional to
\[
p(p\mid y,s)
\propto
\exp\left(
-\frac{\|p\|^2}{2\sigma_P^2}
-\frac{\|y-p\|^2}{2\sigma_N^2}
-\frac{(s-\gamma\|p\|)^2}{2\tau^2}
\right).
\]
The last term is nonlinear in \(p\) through \(\|p\|\). We now show that this nonlinearity forces the posterior mean to depend nonlinearly on the observation.

First, the model is rotation equivariant. For every rotation matrix \(Q\in SO(3)\),
\[
(QP,QY,S)\stackrel{d}{=}(P,Y,S).
\]
Therefore, by uniqueness of the conditional mean,
\[
g^\star(Qy,s)=Qg^\star(y,s)
\]
for almost every \((y,s)\).

Now suppose, for contradiction, that \(g^\star\) is affine:
\[
g^\star(y,s)=a+Ay+bs.
\]
The equivariance identity implies
\[
a+AQy+bs
=
Q(a+Ay+bs)
\]
for every \(Q\in SO(3)\), \(y\in\mathbb R^3\), and \(s\in\mathbb R\). Comparing the constant, \(y\)-dependent, and \(s\)-dependent terms gives
\[
a=0,\qquad b=0,\qquad A=\alpha I_3
\]
for some scalar \(\alpha\). Hence any affine estimator satisfying the rotation-equivariance property must have the form
\[
g^\star(y,s)=\alpha y.
\]
In particular, such an estimator cannot depend on the Doppler observation \(s\).

We now show that the true conditional mean does depend on \(s\). Fix \(y\neq0\), and write
\[
e=\frac{y}{\|y\|}.
\]
By rotational symmetry around the axis \(e\), the posterior mean must be aligned with \(e\), so there exists a scalar \(m(\|y\|,s)\) such that
\[
g^\star(y,s)=m(\|y\|,s)e.
\]
Equivalently, for fixed \(y\neq0\), define
\[
m_y(s):=\mathbb E[e^\top P\mid Y=y,S=s].
\]
The posterior density can be written as
\[
p_s(p)
=
\frac{1}{Z(s)}
\exp\left(
-A_y(p)+\frac{\gamma s}{\tau^2}\|p\|
\right),
\]
where
\[
A_y(p)
=
\frac{\|p\|^2}{2\sigma_P^2}
+
\frac{\|y-p\|^2}{2\sigma_N^2}
+
\frac{\gamma^2\|p\|^2}{2\tau^2}.
\]
Thus \(s\) enters the posterior as an exponential tilt in the statistic \(\|P\|\). Differentiating under the integral sign gives
\[
\frac{d}{ds}m_y(s)
=
\frac{\gamma}{\tau^2}
\operatorname{Cov}_s(e^\top P,\|P\|),
\]
where the covariance is taken with respect to \(p_s(p)=p(p\mid y,s)\). This differentiation is justified by dominated convergence, since the posterior density has Gaussian tails uniformly over compact intervals of \(s\).

It remains to show that this covariance is strictly positive. Write
\[
p=r\omega,\qquad r=\|p\|\ge0,\qquad \omega\in\mathbb S^2.
\]
Since \(y=\|y\|e\), the Gaussian likelihood term satisfies
\[
\exp\left(-\frac{\|y-p\|^2}{2\sigma_N^2}\right)
=
C(r,y)
\exp\left(
\frac{r\|y\|}{\sigma_N^2}e^\top\omega
\right),
\]
where \(C(r,y)\) does not depend on \(\omega\). Hence, conditional on \(\|P\|=r\), the angular distribution is von Mises--Fisher around direction \(e\). Therefore,
\[
\mathbb E[e^\top P\mid \|P\|=r,Y=y,S=s]
=
r A\left(\frac{r\|y\|}{\sigma_N^2}\right),
\]
where
\[
A(\kappa)=\coth(\kappa)-\frac{1}{\kappa}
\]
is the mean resultant length in dimension three. Since \(A(\kappa)>0\) and is strictly increasing for \(\kappa>0\), the map
\[
r\mapsto r A\left(\frac{r\|y\|}{\sigma_N^2}\right)
\]
is strictly increasing for \(r>0\). Since the posterior distribution of \(\|P\|\) is nondegenerate, Chebyshev's association inequality gives
\[
\operatorname{Cov}_s(e^\top P,\|P\|)>0.
\]
Consequently,
\[
\frac{d}{ds}m_y(s)
=
\frac{\gamma}{\tau^2}
\operatorname{Cov}_s(e^\top P,\|P\|)
\neq0 .
\]
Hence \(m_y(s)\) is not constant in \(s\), so the Bayes estimator \(g^\star(y,s)\) depends nontrivially on the Doppler coordinate \(s\).

But rotational equivariance forced every affine equivariant estimator to have the form \(g^\star(y,s)=\alpha y\), which is independent of \(s\). This is a contradiction. Hence
\[
g^\star\notin\mathcal F_{\mathrm{aff}}.
\]

Finally, by the orthogonality principle for conditional expectation, for every square-integrable estimator \(f(Z)\),
\[
\mathbb E\|P-f(Z)\|^2
=
\mathbb E\|P-g^\star(Z)\|^2
+
\mathbb E\|g^\star(Z)-f(Z)\|^2.
\]
The affine estimator class \(\mathcal F_{\mathrm{aff}}\) is finite-dimensional and closed in
\(L^2(\sigma(Z);\mathbb R^3)\). Since \(g^\star\notin\mathcal F_{\mathrm{aff}}\), its distance from this class is strictly positive. Therefore,
\[
\inf_{f\in\mathcal F_{\mathrm{aff}}}
\mathbb E\|P-f(Z)\|^2
>
\mathbb E\|P-g^\star(Z)\|^2.
\]
This proves the claim.

\subsection{Proof of Corollary 2}

\begin{proposition}
\label{prop:LiDAR-gap}
Let
\[
X \sim \mathcal N(0,\sigma_x^2 I_2),
\]
and suppose
\[
r_z=\|X\|+V_r,
\qquad
\theta_z=\arg(X)+V_\theta,
\]
where $V_r\sim\mathcal N(0,\sigma_r^2)$ and $V_\theta\sim\mathcal N(0,\sigma_\theta^2)$ are independent, with $\sigma_\theta>0$. Assume $\theta_z$ is the real value of the angle. The reported Cartesian measurement is
\[
Z=(r_z\cos\theta_z,\; r_z\sin\theta_z)^\top.
\]
Then, for every $z\neq 0$:
\begin{enumerate}
\item $p(x\mid Z=z)$ is not Gaussian in $x$;
\item the Bayes estimator $g^*(z):=\mathbb E[X\mid Z=z]$ is not affine in $z$;
\item consequently,
\[
\inf_{f(z)=a+Bz}\mathbb E\|X-f(Z)\|^2
\;>\;
\mathbb E\|X-g^*(Z)\|^2.
\]
\end{enumerate}
\end{proposition}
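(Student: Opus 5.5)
The plan is to argue as in the proof of Corollary~1, using rotation and reflection symmetry, but to replace the exponential-tilt step with an explicit moment computation. That computation is cleaner here because everything reduces to the Rayleigh radius $R:=\|X\|$ and the scalar noises $V_r,V_\theta$.

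\textbf{Step 1 (posterior, item 1).} I would write the likelihood of $z$ given $x=\rho(\cos\phi,\sin\phi)$. The map $(r,\theta)\mapsto(r\cos\theta,r\sin\theta)$ is not injective: $(r,\theta)$, $(r,\theta+2\pi k)$ and $(-r,\theta+\pi+2\pi k)$ all give the same $z$. So $p(z\mid x)$ is a sum over these preimages of $\exp\bigl(-(\rho-r)^2/2\sigma_r^2\bigr)\exp\bigl(-(\phi-\theta)^2/2\sigma_\theta^2\bigr)$, times a Jacobian factor $1/|r|$ that depends only on $z$. For fixed $z\neq0$, the angular factor is a nonconstant $2\pi$-periodic function of $\phi$, because $\sigma_\theta<\infty$. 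Hence, as $x\to0$ along different rays, the posterior density $\propto e^{-\|x\|^2/2\sigma_x^2}p(z\mid x)$ has direction-dependent limits. The posterior is therefore discontinuous at the origin, while every Gaussian density is continuous, which gives item~1.

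\textbf{Step 2 (reduction to $g^*(z)=\alpha z$).} The joint law of $(X,Z)$ is invariant under $(X,Z)\mapsto(QX,QZ)$ for every $Q\in O(2)$. This holds because the prior is isotropic, $V_r$ enters only through the radius, and the angular noise is symmetric and shift-invariant. By uniqueness of conditional expectation, $g^*(Qz)=Qg^*(z)$ almost everywhere. As in Corollary~1, an affine map satisfying this for all rotations must be of the form $a+Bz$ with $a=0$ and $B$ commuting with all rotations, so $B=\alpha I+\beta J$. Reflection equivariance then kills $\beta J$, so an affine $g^*$ would have to be $\alpha z$.

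\textbf{Step 3 (moment test; the main step).} If $g^*(Z)=\alpha Z$, then orthogonality gives $\mathbb E[(X-\alpha Z)^\top h(Z)]=0$ for all suitable $h$. I would take $h(Z)=Z$ and $h(Z)=Z\|Z\|^2$ and eliminate $\alpha$. Using $X^\top Z=R(R+V_r)\cos V_\theta$, $\|Z\|=|R+V_r|$, $c:=\mathbb E\cos V_\theta=e^{-\sigma_\theta^2/2}$, $m:=\mathbb ER^2=2\sigma_x^2$, $\mathbb ER^4=2m^2$ and $s:=\sigma_r^2$, a direct computation gives
\[
\alpha=\frac{c\,m}{m+s}
\qquad\text{and}\qquad
\alpha=\frac{c\,(2m^2+3sm)}{2m^2+6sm+3s^2}.
\]
Cross-multiplying, the two right-hand sides differ by $c\,m^2 s\neq0$ whenever $\sigma_r>0$, which is a contradiction and proves item~2.

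The obstacle is precisely this degenerate case. If $\sigma_r=0$, then $\|X\|=\|Z\|$ exactly, and by symmetry $\mathbb E[X\mid Z]=c'Z$ is genuinely affine. So the proof must use, and I would state explicitly, the implicit assumption $\sigma_r>0$.

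\textbf{Step 4 (item 3).} This is verbatim the end of the proof of Corollary~1. The Pythagorean identity $\mathbb E\|X-f\|^2=\mathbb E\|X-g^*\|^2+\mathbb E\|g^*-f\|^2$ holds for all $f$. The affine class is a finite-dimensional, hence closed, subspace of $L^2$, and $g^*$ is not in it, so its $L^2$ distance to that subspace is strictly positive. This gives the strict gap.
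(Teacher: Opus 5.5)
Your proof is correct. For items~1 and~2 it takes a genuinely different route from the paper; item~3 is the same.

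\textbf{Item 1.} Write $z=Ru_\beta$ as the paper does, so here $R=\|z\|$, not your $\|X\|$. The paper restricts the log-posterior to the ray $x=tu_\beta$ and notes that $\log\bigl[K(0)e^{Rt/\sigma_r^2}+K(\pi)e^{-Rt/\sigma_r^2}\bigr]$ is not quadratic in $t$. You use a direction-dependent limit at $x=0$ instead, which works. Be precise about which function must be nonconstant. As $\|x\|\to0$, both branches $r=\pm R$ receive the same radial weight $e^{-R^2/2\sigma_r^2}$. So the limit along direction $\phi$ is proportional to $K(\beta-\phi)+K(\beta+\pi-\phi)$, where $K$ is the $2\pi$-periodised Gaussian. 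This sum is a $\pi$-periodised Gaussian, and it is nonconstant because its frequency-$2$ Fourier coefficient is proportional to $e^{-2\sigma_\theta^2}\neq0$.

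\textbf{Item 2.} The paper writes $g^*(Ru_\beta)=q(R)u_\beta$ as a ratio of radial integrals. It shows $q(R)/R\to 2c\sigma_x^2/(\sigma_x^2+\sigma_r^2)$ as $R\downarrow0$ but $q(R)/R\to c\sigma_x^2/(\sigma_x^2+\sigma_r^2)$ as $R\to\infty$, with $c=e^{-\sigma_\theta^2/2}$. You replace this asymptotic analysis, including a Laplace-type concentration step the paper only sketches, with two orthogonality conditions. These need only low-order moments of $\|X\|$ and $V_r$, and your values, with discrepancy $c\,m^2s$, are right.

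\textbf{What each route buys.} Yours is more elementary, and it can even be made quantitative. The best affine estimator is $\frac{cm}{m+s}Z$, and its residual satisfies $\mathbb{E}\bigl[(X-\tfrac{cm}{m+s}Z)^\top Z\|Z\|^2\bigr]=-c\,m^2s/(m+s)$. Cauchy--Schwarz then bounds the gap in item~3 below by $c^2m^4s^2/\bigl((m+s)^2\,\mathbb{E}\|Z\|^6\bigr)$. The paper's route instead reveals the shape of $g^*$: an effective gain that halves from the near-origin regime to the far-range regime.

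\textbf{Two points in your favour.} First, in the plane, rotation equivariance alone leaves the $J$-component of $B$ free. So the paper's reductions to $g^*(z)=\lambda z$ and to $g^*(Ru_\beta)=q(R)u_\beta$ tacitly use the reflection symmetry you invoke. This is harmless, since both arguments only probe the $I$-component ($u_\beta^\top Ju_\beta=0$ and $Z^\top JZ=0$). Second, your caveat $\sigma_r>0$ is necessary, not merely technical. The paper uses it implicitly by dividing by $\sigma_r^2$. For $\sigma_r=0$, $Z$ is $X$ rotated by the angle $V_\theta$, with $V_\theta$ independent of $Z$. Then $g^*(z)=e^{-\sigma_\theta^2/2}z$ is exactly linear, and items~2 and~3 fail.
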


\begin{proof}
Write
\[
Z=Su_B,\qquad S=r_z,\quad B=\theta_z,
\]
and let \(z=Ru_\beta\), \(R>0\). For \(x=\rho u_\phi\), the map
\((s,b)\mapsto su_b\) has Jacobian \(|s|\), and \(z=Ru_\beta\) is obtained
from the two branches
\[
(S,B)=(R,\beta+2\pi k),\qquad
(S,B)=(-R,\beta+\pi+2\pi k).
\]
Hence, with
\[
K(a)=\sum_{k\in\mathbb Z}
\exp\!\left(-\frac{(a+2\pi k)^2}{2\sigma_\theta^2}\right),
\]
Bayes' rule gives, up to a factor independent of \(x\),
\[
p(x\mid Z=z)
\propto
e^{-\rho^2/(2\sigma_x^2)}
\left[
e^{-(R-\rho)^2/(2\sigma_r^2)}K(\beta-\phi)
+
e^{-(R+\rho)^2/(2\sigma_r^2)}K(\beta+\pi-\phi)
\right].
\]

Restricting to the ray \(x=t u_\beta\), \(t>0\), gives
\[
\log p(tu_\beta\mid Z=z)
=
C-\frac{t^2}{2\sigma_x^2}
-\frac{R^2+t^2}{2\sigma_r^2}
+
\log\!\left[
K(0)e^{Rt/\sigma_r^2}
+
K(\pi)e^{-Rt/\sigma_r^2}
\right].
\]
Since \(R>0\) and \(K(0),K(\pi)>0\), the last term is not quadratic in
\(t\). Thus the restriction of the log posterior to a line is not quadratic,
so \(p(x\mid Z=z)\) is not Gaussian.

By rotational equivariance,
\[
g^*(Ru_\beta)=q(R)u_\beta .
\]
Let
\[
\kappa_\theta=\mathbb E[\cos V_\theta]
=e^{-\sigma_\theta^2/2}.
\]
Since \(g^*(Ru_\beta)\) is aligned with \(u_\beta\), we write
\[
g^*(Ru_\beta)=q(R)u_\beta .
\]
Thus
\[
q(R)
=
u_\beta^\top \mathbb{E}[X\mid Z=Ru_\beta]
=
\mathbb{E}[\rho\cos(\phi-\beta)\mid Z=Ru_\beta].
\]
Using \(x=\rho u_\phi\) and \(dx=\rho\,d\rho\,d\phi\), this gives
\[
q(R)
=
\frac{
\int_0^\infty\int_0^{2\pi}
\rho^2\cos(\phi-\beta)\,
p(\rho,\phi\mid Z=Ru_\beta)\,d\phi\,d\rho
}{
\int_0^\infty\int_0^{2\pi}
\rho\,
p(\rho,\phi\mid Z=Ru_\beta)\,d\phi\,d\rho
}.
\]
Substituting the posterior density above and integrating over \(\phi\) yields
\[
q(R)
=
\kappa_\theta
\frac{
\int_0^\infty
\rho^2 e^{-\rho^2/(2\sigma_x^2)}
\left[
e^{-(R-\rho)^2/(2\sigma_r^2)}
-
e^{-(R+\rho)^2/(2\sigma_r^2)}
\right]\,d\rho
}{
\int_0^\infty
\rho e^{-\rho^2/(2\sigma_x^2)}
\left[
e^{-(R-\rho)^2/(2\sigma_r^2)}
+
e^{-(R+\rho)^2/(2\sigma_r^2)}
\right]\,d\rho
}.
\]
Set
\[
c=\frac1{2\sigma_x^2}+\frac1{2\sigma_r^2}.
\]
Using
\[
e^{-(R-\rho)^2/(2\sigma_r^2)}
-
e^{-(R+\rho)^2/(2\sigma_r^2)}
=
2e^{-R^2/(2\sigma_r^2)}e^{-\rho^2/(2\sigma_r^2)}
\sinh\!\left(\frac{R\rho}{\sigma_r^2}\right),
\]
dominated convergence yields
\[
\lim_{R\downarrow0}\frac{q(R)}{R}
=
\kappa_\theta
\frac{
\frac{2}{\sigma_r^2}\int_0^\infty \rho^3e^{-c\rho^2}\,d\rho
}{
2\int_0^\infty \rho e^{-c\rho^2}\,d\rho
}
=
\frac{2\kappa_\theta\sigma_x^2}{\sigma_x^2+\sigma_r^2}.
\]
On the other hand, as \(R\to\infty\), the negative-range term is
exponentially negligible. Completing the square gives concentration near
\[
\rho=\alpha R,\qquad
\alpha=\frac{\sigma_x^2}{\sigma_x^2+\sigma_r^2},
\]
so
\[
\lim_{R\to\infty}\frac{q(R)}{R}
=
\kappa_\theta \alpha
=
\frac{\kappa_\theta\sigma_x^2}{\sigma_x^2+\sigma_r^2}.
\]
The two limits differ, so \(q(R)/R\) is not constant.

If \(g^*\) were affine, rotational equivariance would force
\(g^*(z)=\lambda z\). This would imply \(q(R)=\lambda R\) for all \(R>0\),
contradicting the preceding limits. Hence \(g^*\) is not affine.

Finally, for every square-integrable estimator \(f(Z)\),
\[
\mathbb E\|X-f(Z)\|^2
=
\mathbb E\|X-g^*(Z)\|^2
+
\mathbb E\|g^*(Z)-f(Z)\|^2 .
\]
The affine estimators form a finite-dimensional closed subspace of
\(L^2(\sigma(Z);\mathbb R^2)\), and \(g^*\) does not belong to it. Therefore
its distance from this subspace is positive, giving
\[
\inf_{f(z)=a+Bz}\mathbb E\|X-f(Z)\|^2
>
\mathbb E\|X-g^*(Z)\|^2 .
\]
\end{proof}

\subsection{Proof of Lemma 1}

\begin{proof}
Consider \(x_{k+1} = F x_k + w_k\) and \(\hat{x}_{k+1|k} = \tilde{F}\hat{x}_{k|k}\). Let \(e_k := x_k - \hat{x}_{k|k}\) and \(\eta_k := (F-\tilde{F})\hat{x}_{k|k} + w_k\), so that \(x_{k+1}-\hat{x}_{k+1|k} = F e_k + \eta_k\). Assume that \(e_{k+1} = F e_k + \eta_k\), \(\|F^j\|\le C\lambda^j\) for some \(C>0\), \(0<\lambda<1\), and \(\sup_k \mathbb{E}\|\eta_k\|^2 \le M_\eta\). Then, for all \(k\ge 0\),

Then, from the system and predictor definitions,

\[
x_{k+1}-\hat{x}_{k+1|k}
= F x_k + w_k - \tilde F \hat{x}_{k|k}
= F e_k + \eta_k,
\]
hence
\[
\mathbb{E}\|x_{k+1}-\hat{x}_{k+1|k}\|^2
\le 2\|F\|^2 \mathbb{E}\|e_k\|^2 + 2\mathbb{E}\|\eta_k\|^2.
\]
By assumption, \(\mathbb{E}\|\eta_k\|^2 \le M_\eta\), so it suffices to bound \(\mathbb{E}\|e_k\|^2\).

By iterating the recursion \(e_k = F e_{k-1} + \eta_{k-1}\), we obtain
\[
\begin{aligned}
e_k 
&= F\big(F e_{k-2} + \eta_{k-2}\big) + \eta_{k-1} \\
&= F^2 e_{k-2} + F \eta_{k-2} + \eta_{k-1} \\
&\;\;\vdots \\
&= F^k e_0 + \sum_{t=0}^{k-1} F^{k-1-t}\eta_t.
\end{aligned}
\]

Using \(\|a+b\|^2 \le 2\|a\|^2+2\|b\|^2\),
\[
\mathbb{E}\|e_k\|^2
\le 2\mathbb{E}\|F^k e_0\|^2 + 2\mathbb{E}\Big\|\sum_{t=0}^{k-1} F^{k-1-t}\eta_t\Big\|^2.
\]
For the first term, \(\|F^k\|\le C\lambda^k\) implies
\[
\mathbb{E}\|F^k e_0\|^2 \le C^2 \lambda^{2k}\,\mathbb{E}\|e_0\|^2.
\]
For the second term, by the triangle inequality and Cauchy–Schwarz,
\[
\mathbb{E}\Big\|\sum_{t=0}^{k-1} F^{k-1-t}\eta_t\Big\|^2
\le
\Big(\sum_{j=0}^{k-1} \|F^j\| \sqrt{\mathbb{E}\|\eta_{k-1-j}\|^2}\Big)^2
\le
M_\eta \Big(\sum_{j=0}^{k-1} C\lambda^j\Big)^2.
\]
Using \(\sum_{j=0}^{k-1}\lambda^j \le (1-\lambda)^{-1}\), we obtain
\[
\mathbb{E}\|e_k\|^2
\le
2 C^2 \lambda^{2k}\,\mathbb{E}\|e_0\|^2
+
\frac{2C^2}{(1-\lambda)^2} M_\eta.
\]
Substituting back,
\[
\mathbb{E}\|x_{k+1}-\hat{x}_{k+1|k}\|^2
\le
4\|F\|^2 C^2 \lambda^{2k}\,\mathbb{E}\|e_0\|^2
+
\left(\frac{4\|F\|^2 C^2}{(1-\lambda)^2}+2\right) M_\eta.
\]
Taking \(\limsup_{k\to\infty}\) gives:
\[
\limsup_{k\to\infty}\mathbb{E}\|x_{k+1}-\hat{x}_{k+1|k}\|^2
\le
\left(\frac{4\|F\|^2 C^2}{(1-\lambda)^2}+2\right)M_\eta.
\]

This proves the result.
\end{proof}
\newpage

\section{Prompt}
\label{Appendix:Prompt}
\begin{figure}[h!]
\centering
\noindent
\begin{minipage}[t]{0.99\textwidth}
  \begin{pythonbox}[height=19.cm]
  \begin{lstlisting}[style=pythonstyle]
### Task
Generate ten diverse Python implementations of a function named `approximate`
that performs a Kalman-like PREDICT -> UPDATE step for LiDAR tracking.

### Required signature
    def approximate(F, H, Q, R, x_upd_prev, P, z):
### Required return
    (x_upd_new, P_upd_new, x_pred, P_pred)

Where:
- x_pred     = predicted state BEFORE update (x_{t|t-1})
- P_pred     = predicted covariance BEFORE update
- x_upd_new  = updated state AFTER incorporating measurement
- P_upd_new  = updated covariance after measurement

### Constraints
- Use NumPy only (no external Kalman libraries).
- Implement a deterministic function (no random components for the same inputs).
- Follow the PREDICT -> UPDATE order.

#Example 1
def approximate(F, H, Q, R, x_upd_prev, P, z):
    x_pred = F @ x_upd_prev
    P_pred = F @ (P @ F.T) + Q
    y = z - H @ x_pred
    magnitude = np.linalg.norm(y)
    scaling_factor = 0.01 * magnitude
    R_scaled = R * scaling_factor
    S_innov = H @ (P_pred @ H.T) + R_scaled
    inv_S = np.linalg.inv(S_innov)
    K = P_pred @ H.T @ inv_S
    x_upd_new = x_pred + K @ y
    I = np.eye(F.shape[0])
    P_upd_new = (I - K @ H) @ P_pred
    return (x_upd_new, P_upd_new, x_pred, P_pred) 
    
#Example 2
def approximate(F, H, Q, R, x_upd_prev, P, z):
    # Prediction step
    x_pred = F @ x_upd_prev
    P_pred = F @ (P @ F.T) + Q
    # Innovation
    y = z - H @ x_pred
    # Innovation covariance
    S = H @ (P_pred @ H.T) + R
    # Kalman gain using np.linalg.solve for efficiency
    K = np.linalg.solve(S, H @ P_pred.T).T
    # Update step
    x_upd_new = x_pred + K @ y
    # Update covariance
    P_upd_new = P_pred - K @ H @ P_pred
    P_upd_new += K @ R @ K.T
    return (x_upd_new, P_upd_new, x_pred, P_pred)
  \end{lstlisting}
  \end{pythonbox}
\end{minipage}%
\caption{{ Prompt provided to the language model in the LLM-assisted evolutionary search framework.}}
\label{fig:prompt}

\end{figure}

\newpage
\section{Additional experiments }
\label{Appendix:Additional_experiments}
This appendix details experiments on synthetic Doppler radar tracking, LiDAR-based state estimation, and real-world pedestrian trajectory prediction, covering both state estimation and next state prediction tasks. These include case studies on specific tasks, out of distribution generalization generalization studies, and statistical tests (e.g., z-tests) to confirm that observed improvements are significant and robust.

\subsection{LiDAR}

\label{Appendix:LiDAR}
\begin{figure}[h]
    \centering

    \begin{subfigure}[t]{0.24\textwidth}
        \centering
        \includegraphics[height=2.5cm,width=\linewidth]{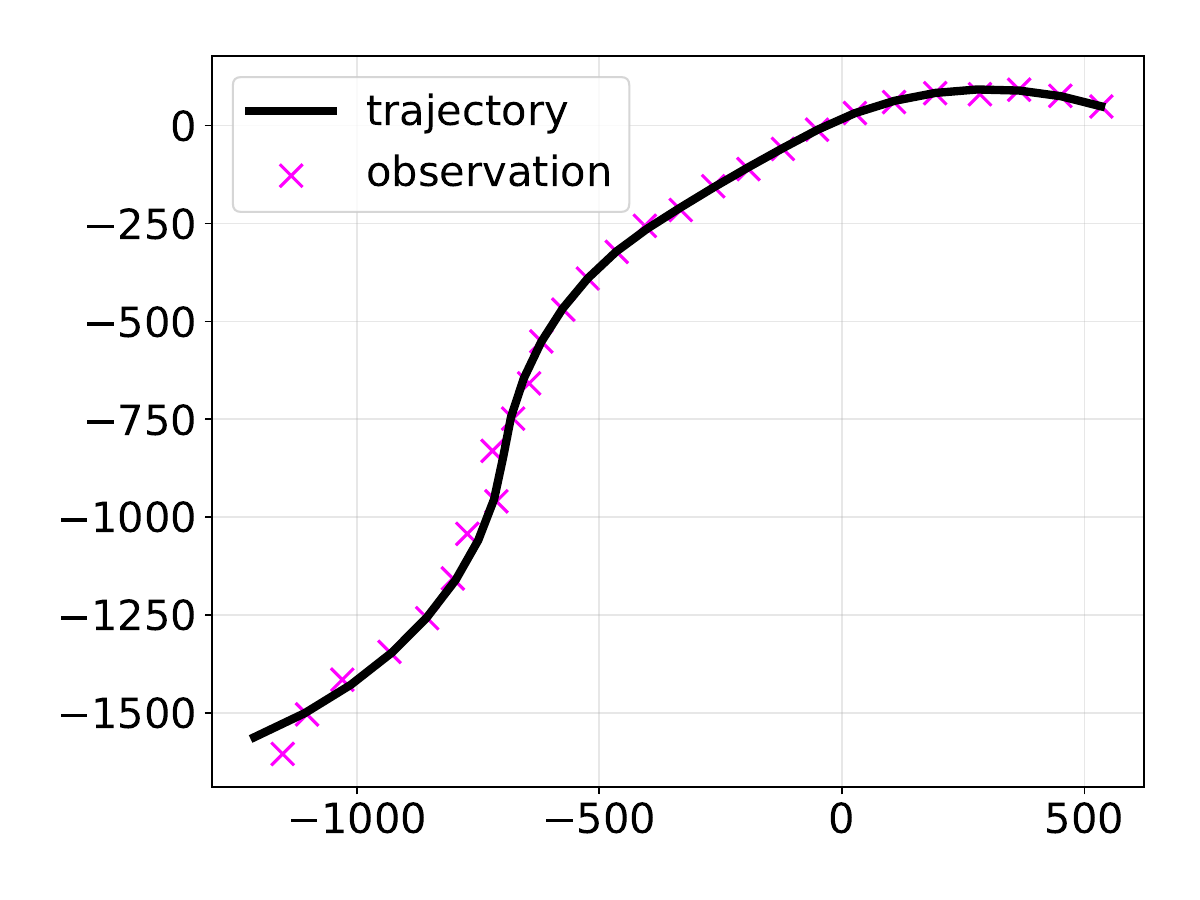}
    \end{subfigure}
    \hfill
    \begin{subfigure}[t]{0.24\textwidth}
        \centering
        \includegraphics[height=2.5cm,width=\linewidth]{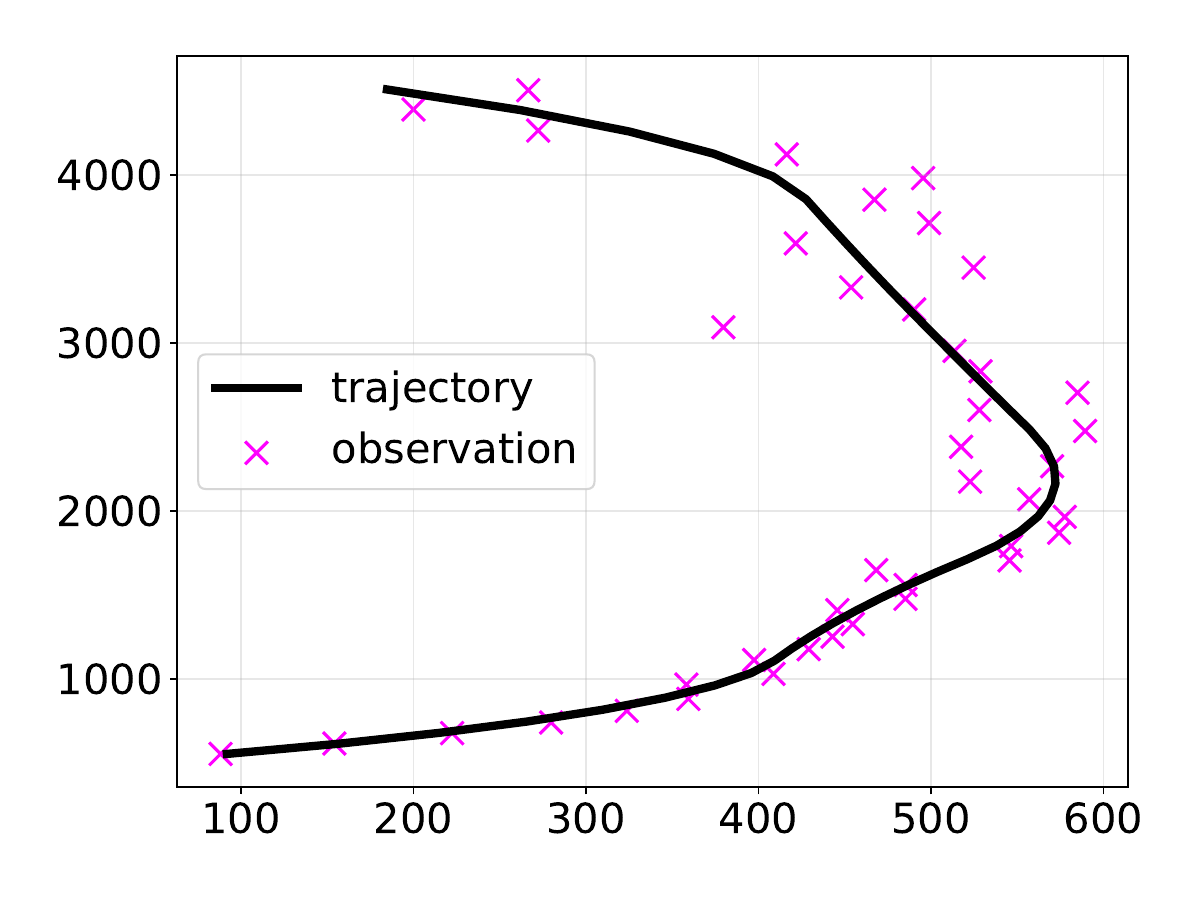}
    \end{subfigure}
    \hfill
    \begin{subfigure}[t]{0.24\textwidth}
        \centering
        \includegraphics[height=2.5cm,width=\linewidth]{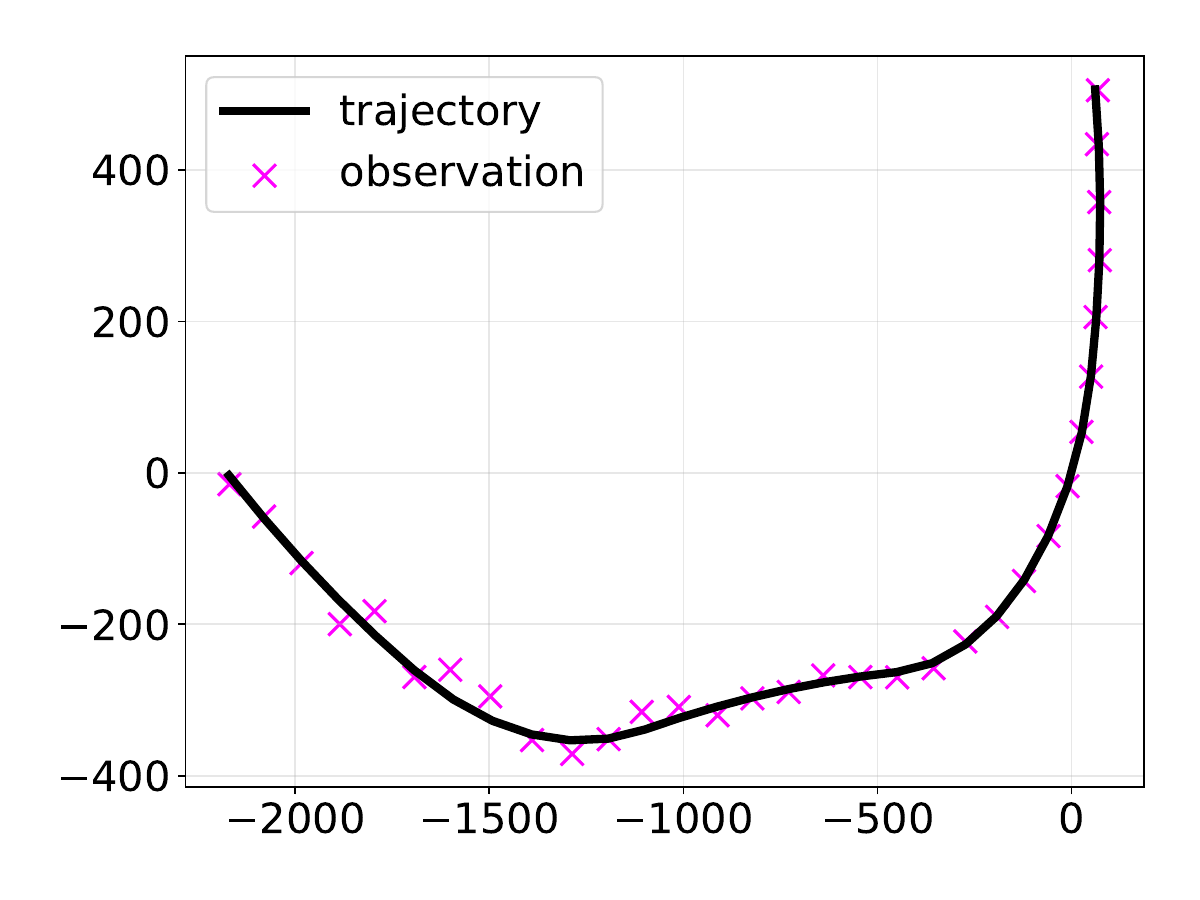}
    \end{subfigure}
    \hfill
    \begin{subfigure}[t]{0.24\textwidth}
        \centering
        \includegraphics[height=2.5cm,width=\linewidth]{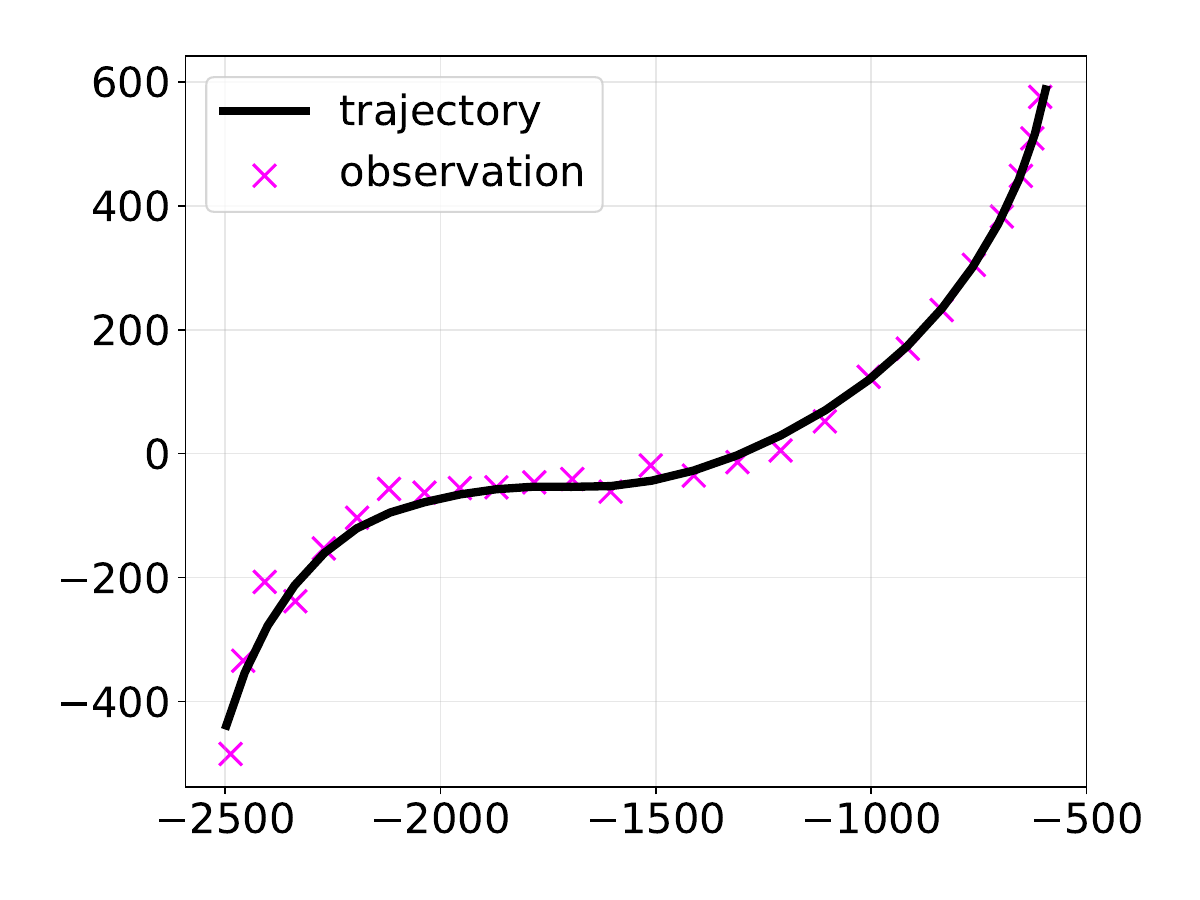}
    \end{subfigure}
    \hfill

    \caption{LiDAR trajectory examples}
\label{fig:LiDAR_trajectoris}
\end{figure}

We simulate $N=2000$ independent 2D trajectories with latent states $x_t = (p_t^x, p_t^y, v_t^x, v_t^y)$. Initial positions are sampled uniformly in a bounded region, and velocities are initialized with 
random direction and magnitude. In figure \ref{fig:LiDAR_trajectoris} we visualize some of the trajectories. Dynamics evolve over piecewise-constant intervals. For each interval, 
we sample its length and radial/tangential accelerations $(a_r, a_t)$. 
Let $v_t = \|(v_t^x, v_t^y)\|$. The acceleration is:
\[
a_t = \left(a_r \frac{v_t^x}{v_t} - a_t \frac{v_t^y}{v_t},\;
            a_r \frac{v_t^y}{v_t} + a_t \frac{v_t^x}{v_t}\right),
\]
and the state evolves as
\[
p_{t+1} = p_t + v_t + \tfrac{1}{2} a_t, \quad
v_{t+1} = v_t + a_t.
\]

Observations are generated via noisy range--bearing measurements:
\[
r_t = \|p_t\| + \epsilon_r, \quad
\theta_t = \mathrm{atan2}(p_t^y, p_t^x) + \epsilon_\theta,
\]
with Gaussian noise, and mapped to Cartesian coordinates:
\[
z_t = (r_t \cos\theta_t,\; r_t \sin\theta_t).
\]

The dataset is split into 1200 training, 300 validation, and 500 test trajectories.

\begin{table}[h]
\centering
\setlength{\tabcolsep}{4.3pt}   
\begin{tabular}{|c|c|c|c|c|c|c|c|c}
\hline
 & \begin{tabular}{c}KF\end{tabular}
 & \begin{tabular}{c}OKF\end{tabular}
 & \begin{tabular}{c}KE\end{tabular}
 & \begin{tabular}{c}OKE\end{tabular}
 & LSTM
 & \begin{tabular}{c}KN\end{tabular}
 & OBS \\
\hline
SE 
& 12.70 $\pm$ 0.1
& 11.16 $\pm$ 0.2
& {\bf 10.52}  $\pm$ 0.2
& 10.77 $\pm$ 0.2
& 20.22 $\pm$ 0.1
& 14.81 $\pm$ 0.1
& 18.15 $\pm$ 0.3\\
\hline

NSP 
& 27.80 $\pm$ 0.2
& 23.94 $\pm$ 0.2
& \textbf{22.40} $\pm$ 0.2
& 22.72  $\pm$ 0.3
& 31.52  $\pm$ 0.2
& 29.62  $\pm$ 0.2
& 76.53 $\pm$ 0.4\\

\hline
\end{tabular}

\caption{Root mean squared error performance of different methods $\pm$ stderr }
\label{tab:LiDAR}
\end{table}

Table \ref{tab:LiDAR} shows that optimizing the noise covariances $Q$ and $R$ significantly impacts Kalman filtering performance. While the OKF improves over the standard KF, the inisializing the  LLM assisted evolutionary search with Q and R obtained by least squares KE achieves the best performance by 5\% on SE and  5\% on NSP metrics, outperforming OKF despite the latter using learned noise parameters. This suggests that improved initialization or joint optimization of $Q$ and $R$ remains critical, and that combining multiple initialization strategies (e.g., least-squares and OKF) is important to reduce the performance gap.

Among learning-based baselines, KalmanNet (KN) improves over classical LSTM but remains inferior to KF, OKF, KE and OKF, indicating limited gains from end-to-end learning in this setting. LSTM performs the worst, likely due to difficulties in capturing the underlying physical structure and noise characteristics of the system. Overall, methods that explicitly leverage system structure (KE, OKF) demonstrate superior performance compared to purely data-driven approaches.
\begin{table}[h]
\centering
\setlength{\tabcolsep}{4.3pt}   
\begin{tabular}{|c|c|c|c|c|c|c|c|c}
\hline
 & \begin{tabular}{c}KF\end{tabular}
 & \begin{tabular}{c}OKF\end{tabular}
 & \begin{tabular}{c}KE\end{tabular}
 & \begin{tabular}{c}OKE\end{tabular}
 & LSTM
 & \begin{tabular}{c}KN\end{tabular}
 & OBS \\
\hline

SE  
& \textbf{2.87} 
& \textbf{2.87 } 
& 3.1  
& 3 
& 17.43 
& 49.51  
& 1.65  \\
\hline

NSP  
& \textbf{2.59 }
& \textbf{2.58 }
& 3.90 
& 2.92 
& 12.07 
& 212.0  
& 1.55 \\
\hline
\end{tabular}

\caption{Test time in seconds of different methods}
\label{tab:LiDAR_time}
\end{table}

Table \ref{tab:LiDAR_time} reports runtime in seconds. Classical filtering methods (KF, OKF, KE, OKE) exhibit comparable and very low computational cost ($\sim$2.5--3s), with KE introducing only a minor overhead relative to KF/OKF. In contrast, learning-based approaches are significantly slower: LSTM incurs an order-of-magnitude increase, while KalmanNet (KN) is substantially more expensive.Although evolutionary or LLM aided evolutionary search  may introduce additional computational overhead through non linear operators, this cost is incurred offline and does not affect inference-time efficiency. Overall, model-based approaches maintain a strong advantage in runtime while achieving competitive or superior accuracy.

\begin{figure}[h]
    \centering

    \begin{subfigure}[t]{0.45\textwidth}
        \centering
        \includegraphics[width=\linewidth]{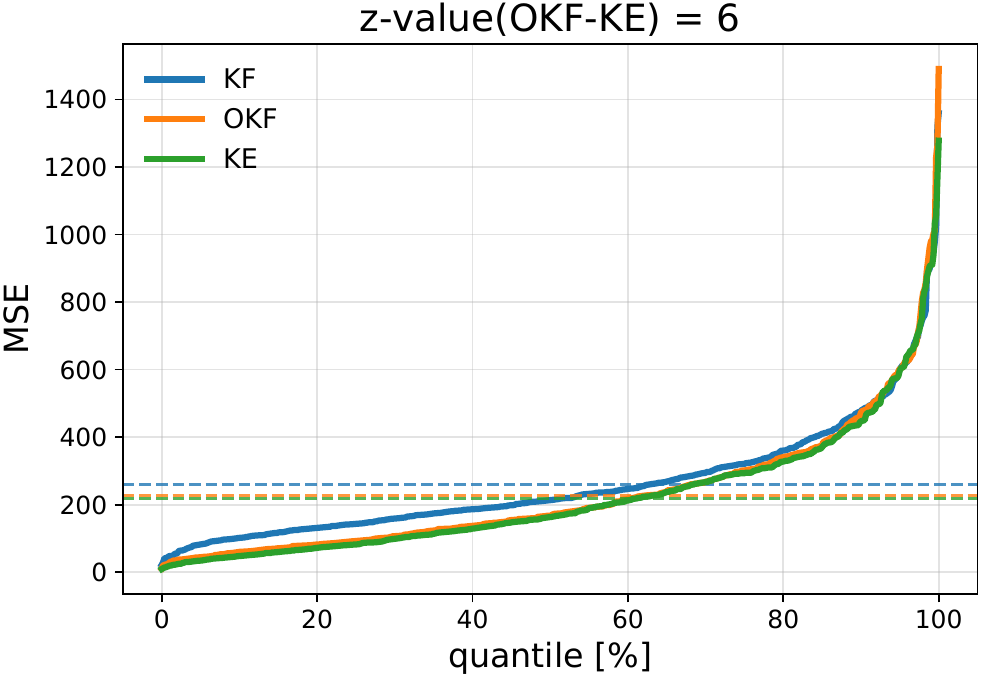}
    \end{subfigure}
    \hfill
    \begin{subfigure}[t]{0.45\textwidth}
        \centering
        \includegraphics[width=\linewidth]{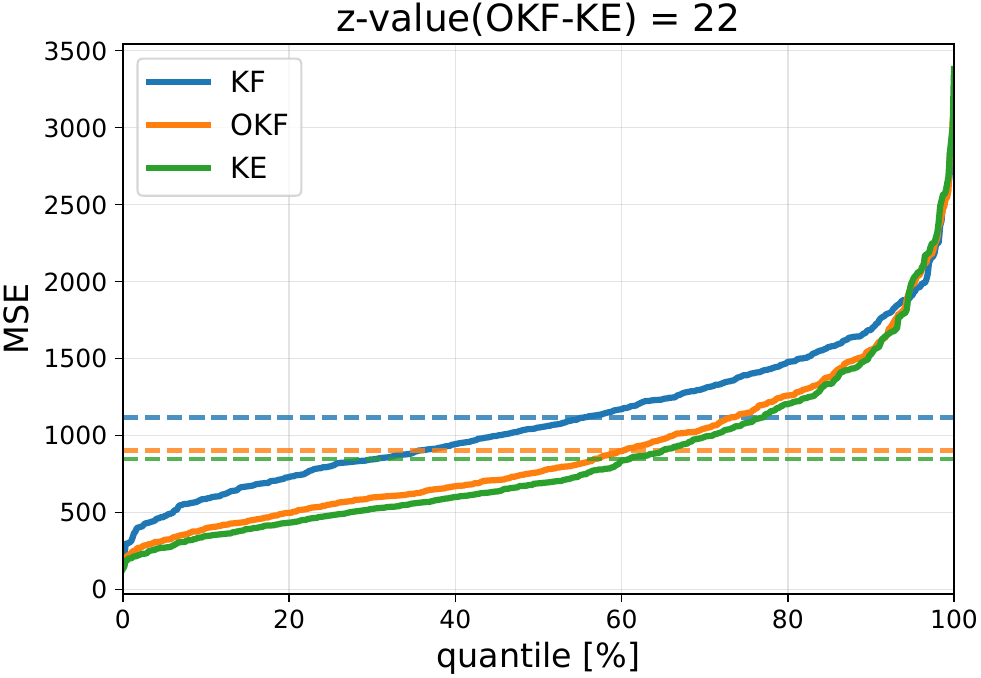}
    \end{subfigure}
    \hfill

    \caption{LiDAR Statistical Tests}
\label{fig:LiDAR_statistical_test}
\end{figure}
Figure \ref{fig:LiDAR_statistical_test} shows quantile plots of per-timestep MSE, revealing strongly heavy-tailed error distributions. The quantile curves show that KE (our method) consistently outperforms KF and OKF, particularly at higher quantiles. In the state estimation setting, gains are small despite statistical significance (z = 6). In the next state prediction, KE yields substantial improvements, especially in the upper tail (hard cases), reflected by a large z-value (z = 22).

\subsection{NCLT}
\label{Appendix:NCLT}
The NCLT dataset \cite{10.1177/0278364915614638} is a long-term robotics dataset collected on the University of Michigan campus using a Segway platform equipped with LiDAR and multiple onboard sensors. GPS measurements are obtained from a consumer-grade receiver, while ground-truth poses are provided by a high-accuracy GPS/INS system through post-processed sensor fusion. In practice, consumer-grade GPS measurements degrade significantly or become unavailable in challenging environments—particularly during indoor transitions—introducing substantial observation noise and model mismatch, as illustrated in Fig.~\ref{fig:Michigan_Campus}.

\begin{figure}[h!]
    \centering
    \includegraphics[width=\linewidth]{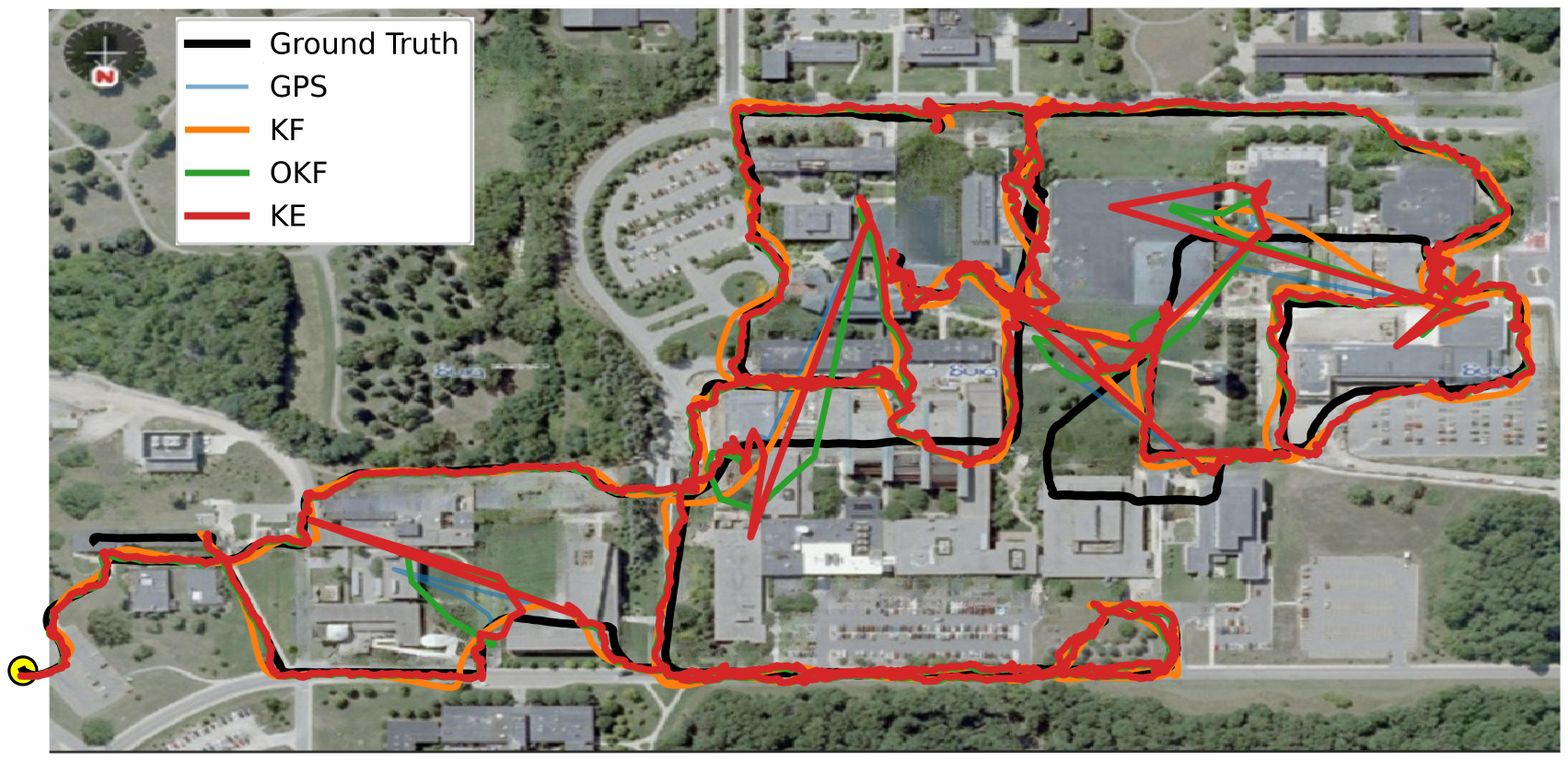}
    \caption{Michigan Campus. }
    \label{fig:Michigan_Campus}
\end{figure}

The  trajectories consist of the state is the vehicle’s 2D location and velocity, and $\tilde{F}$ is modeled according to a constant-velocity model. 
The observation (both true $H$ and modeled $\tilde{H}$) corresponds to the location. This results in the following model:
\small
\[
\tilde{F} =
\begin{pmatrix}
1 & 0 & 1 & 0 \\
0 & 1 & 0 & 1 \\
0 & 0 & 1 & 0 \\
0 & 0 & 0 & 1
\end{pmatrix}\, , \,\,\,\,\,\,\,\,
\tilde{H} = H =
\begin{pmatrix}
1 & 0 & 0 & 0 \\
0 & 1 & 0 & 0
\end{pmatrix}.
\]

The dataset consists of 27 trajectories, which we divide into 21 for training, 3 for validation, and 3 for testing. Due to this limited number of trajectories and the inherent noise in the measurements, the dataset may not be ideally suited for learning complex dynamical models Nevertheless, as shown in Table~\ref{table:NCLT_RESULTS}, the optimized Kalman Filter (OKF)  improves significantly  over the standard KF, highlighting the benefit of parameter tuning. More importantly, our proposed methods, KE and OKE, consistently outperform both KF and OKF in both state estimation and next state prediction.

Moreover, OKE achieves a performance gain of approximately 2\% over OKF in SE  and 3\% in NSP . While these improvements are moderate, they are consistent across tasks and are obtained on top of an already optimized baseline, indicating that our method provides a more accurate representation of the underlying system dynamics beyond conventional Kalman filtering approaches.
\begin{table}[h]
\centering
\setlength{\tabcolsep}{4.3pt}
\begin{tabular}{|c|c|c|c|c|c|c|c|}
\hline
 & KF & OKF & KE & OKE & LSTM & KN & OBS \\
\hline
SE
& 26.3 $\pm$ 0.2
& 23.0 $\pm$ 0.2
& 22.7 $\pm$ 0.2
& \textbf{22.6} $\pm$ 0.2
& 166.1 $\pm$ 0.5
& 49.4 $\pm$ 0.3
& 49.7 $\pm$ 0.3 \\
\hline
NSP
& 26.8 $\pm$ 0.2
& 23.1 $\pm$ 0.2
& 22.8 $\pm$ 0.2
& \textbf{22.5} $\pm$ 0.2
& 53.2 $\pm$ 0.3
& 49.8 $\pm$ 0.3
& 50.0 $\pm$ 0.3 \\
\hline
\end{tabular}

\caption{RMSE comparison of different methods for NCLT dataset for SE and NSP.}

\label{table:NCLT_RESULTS}

\end{table}

Acknowledgment : This work uses the NCLT Dataset (University of Michigan North Campus Long-Term Vision and LiDAR Dataset), made available under the Open Database License (ODbL). We additionally cite Carlevaris-Bianco et al. (2016) as requested by the dataset authors.

\subsection{Doppler}
\label{Appendix:Doppler}

In this section follow the experimental protocol of OKF paper. We consider five benchmarks of increasing complexity: \textit{Toy}, \textit{Close}, \textit{Const\_v}, \textit{Const\_a}, and \textit{Free Motion}. These benchmarks are designed to systematically introduce violations of standard Kalman filtering assumptions.
\begin{table}[h]
\centering
\small
\begin{tabular}{|l|c|c|c|c|c|}
\hline

\textbf{Benchmark} & \textbf{Aniso.} & \textbf{Polar} & \textbf{Uncent.} & \textbf{Accel.} & \textbf{Turns} \\
\hline
Toy         & $\times$ & $\times$ & $\times$ & $\times$ & $\times$ \\
Close       & \checkmark & \checkmark & $\times$ & $\times$ & $\times$ \\
Const\_v    & \checkmark & \checkmark & \checkmark & $\times$ & $\times$ \\
Const\_a    & \checkmark & \checkmark & \checkmark & \checkmark & $\times$ \\

Free Motion & \checkmark & \checkmark & \checkmark & \checkmark & \checkmark \\
\hline
\end{tabular}
\caption{Benchmark definitions. Each column indicates whether a given property is present in the scenario.}
\label{tab:benchmarks}
\end{table}
Each benchmark is defined as a subset of the following properties:
\begin{itemize}
    \item \textbf{Anisotropic}: horizontal motion is more likely than vertical motion.
    \item \textbf{Polar}: noise is generated i.i.d.\ in spherical coordinates.
    \item \textbf{Uncentered}: targets are distributed far from the origin.
    \item \textbf{Acceleration}: motion includes changes in velocity.
    \item \textbf{Turns}: trajectories are not constrained to straight lines.
\end{itemize}

The \textit{Toy} benchmark represents a simplified setting with minimal assumption violations, while the \textit{Free Motion} benchmark corresponds to a realistic tracking scenario with multiple turns and accelerations. Intermediate benchmarks progressively increase the level of complexity. The benchmark definitions are summarized in Table~\ref{tab:benchmarks}.

For each benchmark, we simulate target trajectories according to the corresponding properties and generate observations using the specified noise model. The models are trained on 1200 trajectories, validated on 300 trajectories, and evaluated on 1000 unseen trajectories.
Performance is measured using the root mean squared error (RMSE) between the estimated and true target states.

\begin{table*}[h]
\centering
\small
\begin{tabular}{|c|c|c|c|c|c|c|c|}
\hline
\textbf{Benchmark} & \textbf{KF} & \textbf{OKF} & \textbf{KE} & \textbf{OKE} & \textbf{KN} & \textbf{LSTM} & \textbf{OBS} \\
\hline
Toy      & 109.24 & 78.30 & 82.69  & \textbf{78.03}  & 99.63  & 110.23 & 173.10 \\
\hline
Close    & 19.71  & 19.73 & \textbf{18.62}  &  \textbf{18.62 } & 26.41  & 35.81  & 44.19  \\
\hline
Const\_v & 85.81  & 83.50 &  \textbf{73.95}  & 76.75  & 111.69 & 119.69 & 198.41 \\
\hline
Const\_a & 95.57  & 91.78 & 82.67  &  \textbf{ 81.63}  & 123.65 & 136.27 & 216.58 \\
\hline
Free     & 101.90 & 95.72 & 86.71  &  \textbf{84.23}  & 151.20 & 111.45 & 186.42 \\
\hline
\end{tabular}
\caption{RMSE comparison of different methods for Doppler radar tracking and state estimation across benchmarks. Lower values indicate better performance.}
\label{tab:DopplerSE}
\end{table*}
Table~\ref{tab:DopplerSE} reports the RMSE across all benchmarks for state estimation. \textbf{KE} and \textbf{OKE} consistently achieve the best performance, outperforming prior methods across all scenarios.

In the Toy benchmark, OKE achieves the lowest RMSE, improving over OKF by 0.34\%. 
In Close, KE and OKE attain the best performance, reducing error by 5.5\% relative to the strongest baseline. 
For Const\_v, KE yields the largest gain, improving over the next best method by 11.4\%. 
In Const\_a, OKE achieves the best result with a 1.3\% improvement over KE. 
Finally, in Free, OKE outperforms all methods with a 2.9\% gain over the next best approach. Overall, these results demonstrate that KE, and especially OKE, provide consistent and robust gains across diverse motion settings.
KalmanNet exhibits weaker performance across benchmarks, often underperforming both KF and LSTM. This is consistent with prior observations that its reported gains depend on comparisons to non-optimized Kalman Filters. In contrast, optimized filtering approaches (OKF, KE, OKE) remain highly competitive and consistently outperform learned baselines when strong model structure is available.

Overall, these results highlight that KE and OKE provide consistent and robust gains over state-of-the-art baselines, particularly in scenarios with increasing dynamical complexity and model mismatch.

\begin{table}[h]
\centering
\begin{tabular}{|c|c|c|c|c|c|c|c|}
\hline
\textbf{Benchmark} & \textbf{KF} & \textbf{OKF} & \textbf{KE} & \textbf{OKE} & \textbf{KN} & \textbf{LSTM} & \textbf{OBS} \\
\hline
Toy     & 132.37 & 87.54  & 105.33 & \textbf{86.93} & 147.01 & 127.07 & 222.18 \\
\hline
Close   & 23.92  & 24.75  & \textbf{22.70} & 23.06 & 37.46  & 35.72  & 84.08 \\
\hline
Const\_v & 94.80  & 89.352 & \textbf{85.4}  & 85.6  & 127.35 & 143.02 & 211.15 \\
\hline
Const\_a & 109.78 & 105.13 & 99.5   & \textbf{94.65} & 144.48 & 164.24 & 244.52 \\
\hline
Free    & 125.78 & 112.21 & 102.95 & \textbf{102.76} & 133.66 & 151.47 & 205.15 \\
\hline
\end{tabular}
\caption{RMSE comparison of different methods for Doppler radar tracking and next state prediction across benchmarks. Lower values indicate better performance.}
\label{tab:DopplerNSP}
\end{table}

Table~\ref{tab:DopplerNSP} reports RMSE across all benchmarks for next state prediction. 
Consistent with the state estimation results, \textbf{KE} and \textbf{OKE} achieve the strongest overall performance, outperforming both classical and learning-based baselines across all scenarios.
In \textit{Toy}, OKE improves over OKF by 0.7\%. 
In \textit{Close}, KE achieves the best result, reducing error by 5.1\% relative to the next best method. 
For \textit{Const\_v}, KE improves over OKF by 4.4\%. 
In \textit{Const\_a}, OKE yields a 10.0\% reduction in RMSE over OKF, representing the largest gain. 
Finally, in \textit{Free}, OKE improves over OKF by 8.4\%.
Overall, gains are consistent across all tasks, remaining modest in simpler settings and becoming more pronounced as dynamical complexity increases.
\begin{figure}[h]
    \centering

    \begin{subfigure}[t]{0.48\textwidth}
        \centering
        \includegraphics[width=\linewidth]{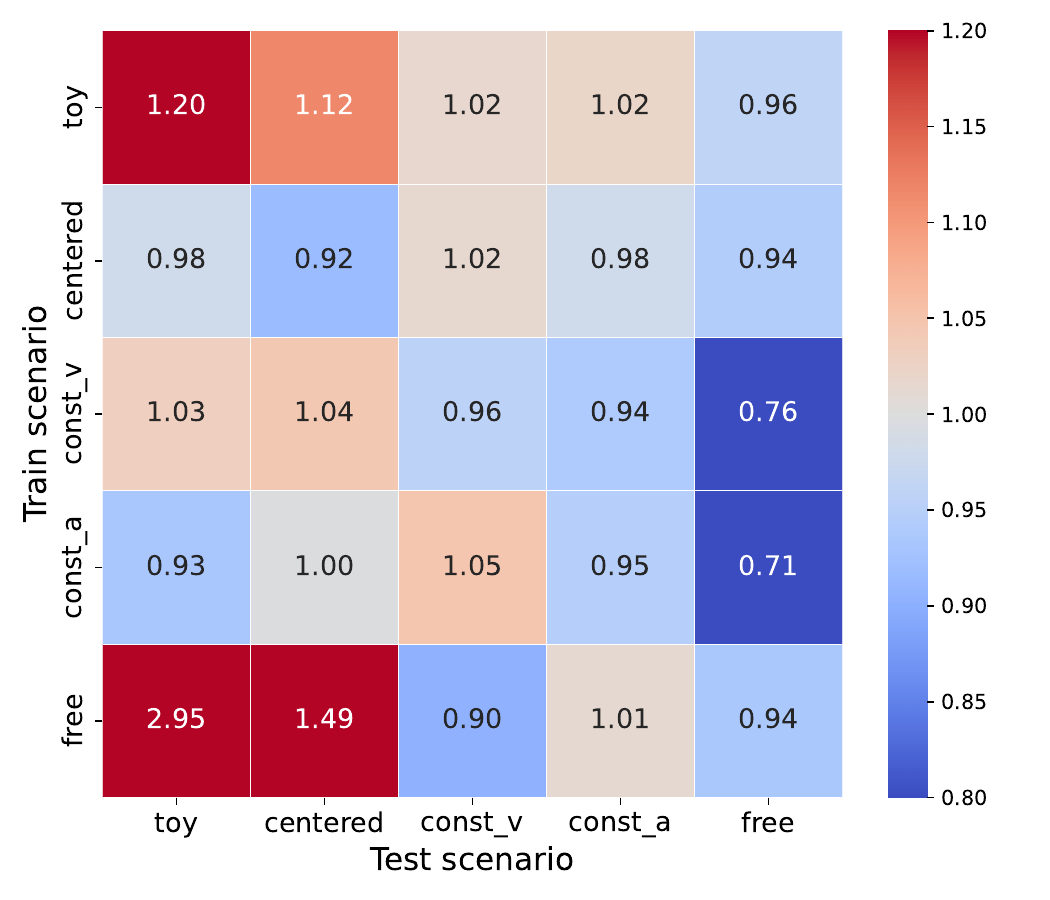}
        \caption{KE RMSE/ OKF RMSE}
        \label{fig:ke}
    \end{subfigure}
    \hfill
    \begin{subfigure}[t]{0.48\textwidth}
        \centering
        \includegraphics[width=\linewidth]{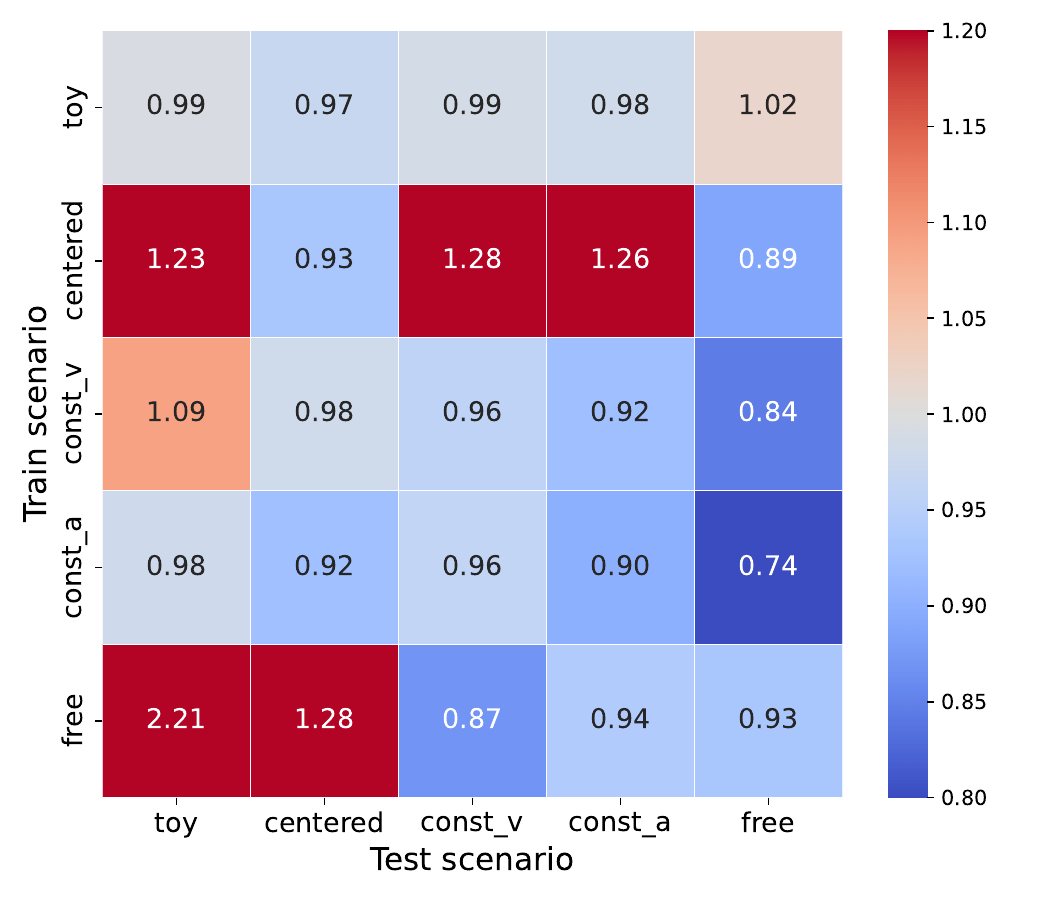}
        \caption{OKE RMSE/ OKF RMSE}
        \label{fig:oke}
    \end{subfigure}

    \caption{Generalization study in next state prediction}
    \label{fig:generaliztion_study}
\end{figure}

Figure ~\ref{fig:generaliztion_study} presents the generalization performance of KE and OKE across training and testing scenarios, reported as RMSE ratios relative to OKF (values below 1 indicate improvement). Overall, both methods demonstrate strong generalization, with most cross-scenario evaluations achieving performance comparable to or better than OKF.

KE shows stable behavior across similar dynamics, particularly between structured settings such as const\_v and const\_a, where it consistently achieves improvements (ratios < 1). However, its performance degrades when trained on the free scenario and tested on simpler regimes (e.g., toy), indicating sensitivity to mismatched dynamics.

OKE exhibits more consistent generalization across diverse settings. Notably, it maintains improvements (or near-parity) in most cross-scenario transfers, and shows stronger robustness than KE when generalizing from complex (free) to simpler scenarios. In challenging mismatched cases, such as training on centered and testing on const\_v, OKE remains competitive despite some degradation.

Overall, the results suggest that both methods generalize well across related dynamics, with OKE providing more robust performance under distribution shifts, especially when transferring between structurally different motion regimes.

\begin{figure}[h]
    \centering
    {\small
    \textcolor{KEcolor}{\rule[0.5ex]{1em}{0.2em}}\, KE \hspace{1em}
    \textcolor{yellow}{\rule[0.5ex]{1em}{0.2em}}\, OKE \hspace{1em}
    \textcolor{OKFcolor}{\rule[0.5ex]{1em}{0.2em}}\, OKF \hspace{1em}
    \textcolor{KFcolor}{\rule[0.5ex]{1em}{0.2em}}\, KF
    }

    \vspace{0.1em} 

    \includegraphics[width=0.7\linewidth]{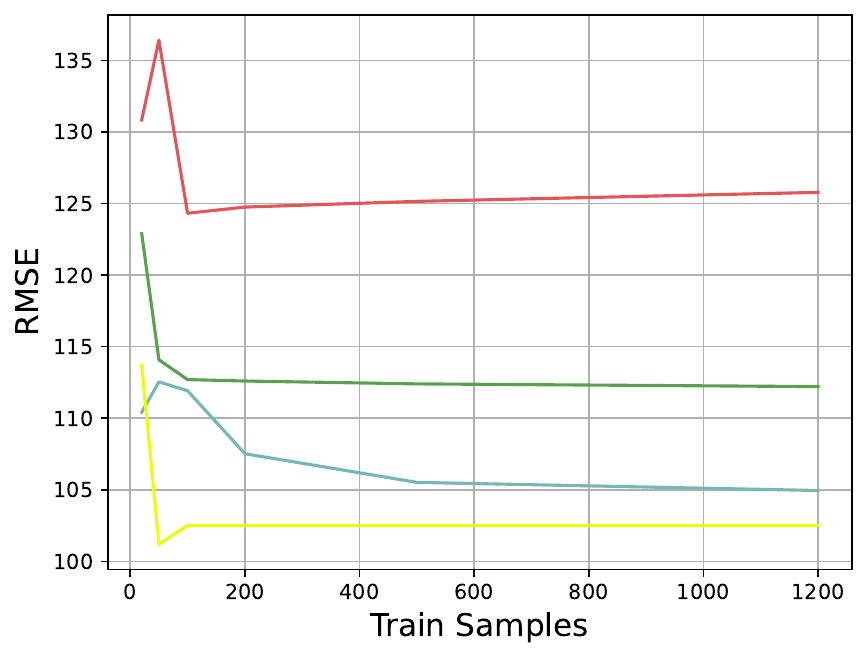}
    \caption{ RMSE as a function of training samples}
    \label{fig:performance_per_sample_free}
\end{figure}

Figure \ref{fig:performance_per_sample_free} shows RMSE as a function of the number of training samples, ranging from 20 to 1000. KE and OKE perform well even with a small number of samples, achieving low error early and improving as more data becomes available. In the low-data regime, OKE exhibits a brief instability at the smallest sample size due to high estimation variance, followed by a sharp reduction in RMSE and rapid convergence. KE follows a similar trend but improves more gradually, requiring additional data to reach comparable performance.

As the number of samples increases, both KE and OKE stabilize, with OKE consistently achieving the lowest RMSE across all regimes. In contrast, KF and OKF remain largely insensitive to the number of training samples, resulting in flat error curves and consistently higher RMSE. Overall, these results demonstrate that KE and especially OKE are data-efficient and robust, providing strong performance from as few as 20 samples.

\begin{figure}[h]
\centering
\noindent
\begin{minipage}[t]{0.49\textwidth}
  \begin{pythonbox}[height=8.cm]
  \begin{lstlisting}[style=pythonstyle]
def approximate(F, h_fun, Q, R, x_upd, P_upd, z):
    H = h_fun(x_upd, z)
    y = z - H @ x_upd
    y = np.maximum(y, -10) + np.minimum(y, 10)
    S = H @ P_upd @ H.T + R
    eps = 1e-8
    S_inv = np.linalg.inv(S + eps * np.eye(S.shape[0]))
    K = (P_upd @ H.T) @ S_inv
    x_upd = x_upd + K @ y
    x_upd = x_upd * 0.9 + (H.T @ z) * 0.1
    I = np.eye(F.shape[0])
    P_upd = (I - K @ H) @ P_upd @ (I - K @ H).T + K @ R @ K.T
    P_upd = P_upd * 0.8
    xp_next = F @ x_upd
    P_next = F @ P_upd @ F.T + Q
    return xp_next, P_next 
  \end{lstlisting}
  \end{pythonbox}
\end{minipage}%
\hfill
\begin{minipage}[t]{0.49\textwidth}
  \begin{pythonbox}[height=8.cm]
  \begin{lstlisting}[style=pythonstyle]
def approximate(F, h_fun, Q, R, x, P, z):
    H = h_fun(x, z)
    x_pred = F @ x
    P_pred = F @ P @ F.T + Q * 0.95
    y = z - H @ x_pred
    S = H @ P_pred @ H.T + R * 0.8 + 1e-12 * np.eye(len(z))
    S = np.maximum(S, 1e-14)
    inv_S = np.linalg.inv(S)
    K = P_pred @ H.T @ inv_S
    K = K * 0.65 + 0.35 * np.sign(K) * np.max(abs(K)) * np.eye(F.shape[0], len(z))
    x = x_pred + K @ y
    P = (np.eye(F.shape[0]) - K @ H) @ P_pred
    P = np.clip(P * 0.85, 1e-20, 250)
    return x, P
    
\end{lstlisting}
  \end{pythonbox}
\end{minipage}
\caption{{ Algorithms discovered for free benchmark for NSP on the left and SE on the right.}}
\label{fig:free_discovered_algorithms}
\end{figure}

Figure~\ref{fig:free_discovered_algorithms} presents representative algorithms discovered by our framework for the Free benchmark, for both next state prediction (NSP, left) and state estimation (SE, right). While both variants preserve the core Kalman filtering structure—maintaining the predict–update decomposition and the use of the innovation $y = z - Hx$ and Kalman gain $K$—they introduce several consistent non-affine modifications.  First, both algorithms apply nonlinear transformations to the innovation, including clipping and higher-order statistics (e.g., elementwise bounds or variance-dependent scaling), which act as implicit robustness mechanisms against outliers and heavy-tailed noise. Second, the covariance update incorporates adaptive damping and scaling factors (e.g., multiplicative shrinkage, clipping, or additive jitter), which stabilize matrix inversion and prevent degeneracy in ill-conditioned regimes. Third, the gain matrix is modified through data-dependent rescaling, including elementwise normalization and magnitude-based attenuation, effectively reducing the influence of unreliable measurements. Notably, the NSP variant (left) emphasizes stability in the propagated covariance through conservative updates and residual smoothing, while the SE variant (right) introduces more aggressive, input-dependent corrections to the Kalman gain and covariance, reflecting the different objectives of prediction versus filtering. Despite their simplicity, these learned modifications consistently improve performance across benchmarks, suggesting that small, structured deviations from affine updates are sufficient to capture non-Gaussian effects and model mismatch.

\section{Mot 20}

\label{Appendix:mot_20_prerformance}
MOT20 is a benchmark consisting of 8 sequences introduced in addition to prior MOT 17 \cite{leal2015motchallenge}, specifically designed to capture extremely crowded and challenging scenes. The dataset enables the evaluation of multi-object tracking methods under severe crowd density and occlusion conditions.
\begin{table}[h]
\centering
\setlength{\tabcolsep}{4.3pt}
\begin{tabular}{|c|c|c|c|c|c|c|c|}
\hline
 & KF
 & OKF
 & KE
 & OKE
 & LSTM
 & KN
 & OBS \\
\hline

NSP 
& 0.5578
& 0.4986
& \textbf{0.4599}
& 0.4622
& 0.6799
& 1.22
& 1.881 \\
\hline
\end{tabular}
\caption{RMSE comparison of different methods for MOT dataset for Next State Prediction.}

\label{tab:mot_performance}
\end{table}

Table \ref{tab:mot_performance} shows that both KalmanNet (KN) and LSTM perform significantly worse than the classical Kalman filtering approaches. In particular, KN (1.22) and LSTM (0.6799) yield substantially higher error values compared to the standard Kalman Filter (KF, 0.5578) and its optimized variants (OKF, KE, OKE), with KE achieving the best performance (0.4599). This indicates that, in this setting, neural network based approaches such as KalmanNet and LSTM fail to match the accuracy of model-based Kalman Filters, likely due to the difficulty of generalizing under the given conditions.

\begin{table}[h]
\centering
\setlength{\tabcolsep}{4.3pt}
\begin{tabular}{|c|c|c|c|c|c|c|c|}
\hline
 & KF
 & OKF
 & KE
 & OKE
 & LSTM
 & KN
 & OBS \\
\hline

NSP  
& 88.593
& 88.593
& 218.518
& 122.121
& 488.1
& \textbf{41.2}
& 63.99 \\
\hline
\end{tabular}

\caption{Test time in seconds of different methods}
\label{tab:mot_time}
\end{table}

Table \ref{tab:mot_time} indicate that Kalman Filter–based methods consistently achieve better performance while remaining computationally efficient. In particular, KF and its variants outperform learning-based approaches such as LSTM and KalmanNet in terms of accuracy, while also maintaining lower or comparable runtimes. Moreover, the KE and OKE variants further improve estimation performance over the standard KF without significantly increasing computational cost, demonstrating an effective balance between accuracy and efficiency.

\section{Convergence}

\begin{figure}[h]
    \centering

    \begin{subfigure}[t]{0.48\textwidth}
        \centering
        \includegraphics[width=\linewidth]{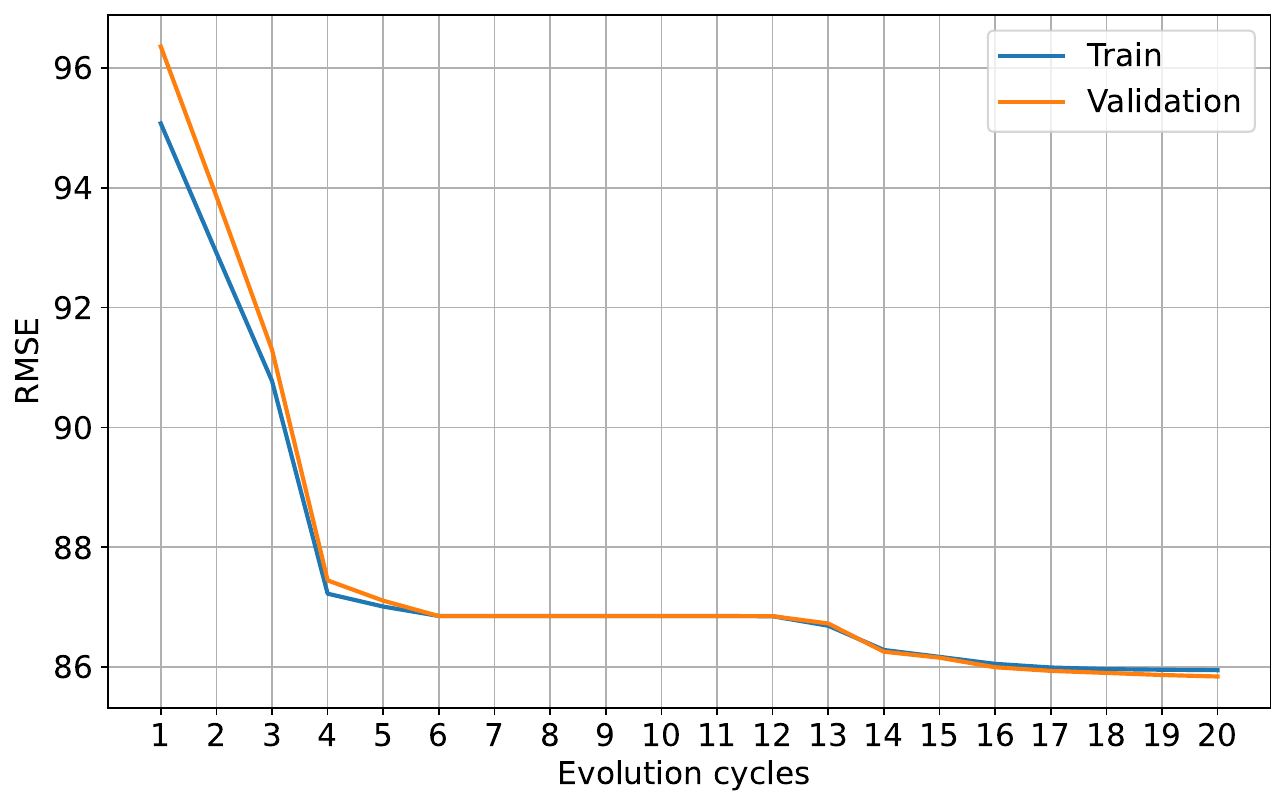}
        \caption{Free}
        \label{fig:ke}
    \end{subfigure}
    \hfill
    \begin{subfigure}[t]{0.48\textwidth}
        \centering
        \includegraphics[width=\linewidth]{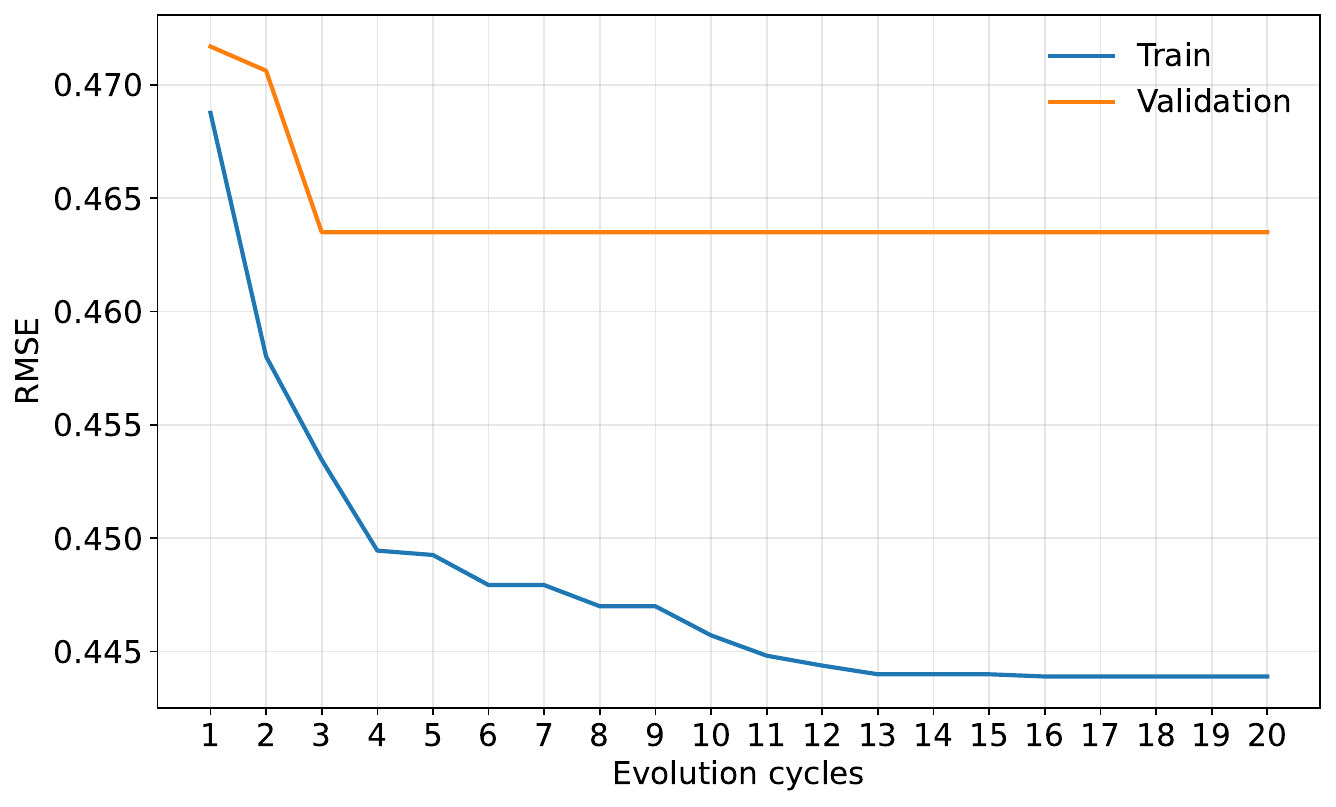}
        \caption{Mot}
        \label{fig:oke}
    \end{subfigure}

    \caption{Generalization study in next state prediction}
    \label{fig:curves}
\end{figure}
Figure \ref{fig:curves} demonstrates convergence by reporting the best-so-far performance across evolutionary cycles. Most gains occur in early iterations, followed by a plateau. In the \emph{Free} setting, training and validation curves remain aligned, indicating stable generalization. Moreover, on the free dataset, the validation loss continues to decrease, indicating that further training iterations may positively impact the final model performance.”  In the \emph{Mot} setting, validation performance saturates early, while later cycles produce solutions yielding \texttt{NaN} values at test time, suggesting overfitting to the training set through numerically unstable updates.

\end{document}